%% file: main.tex
\documentclass[10pt]{article} % For LaTeX2e
\usepackage[preprint]{tmlr}

\usepackage{hyperref}
\usepackage{url}
\usepackage{acro}                    % For acronym management
\usepackage[version=4]{mhchem}         % For chemical formulas
\usepackage{comment}                    % For commenting out sections
\usepackage{pdfpages}                   % For including PDF pages
\usepackage{xcolor}                     % For colors (enhanced version)
\usepackage{amsmath,amssymb,amsfonts}   % Enhanced math support
\usepackage{mathtools}
\usepackage{amsthm}
\newtheorem{theorem}{Theorem}
\usepackage{algorithmic}                % For algorithms
\usepackage{subcaption}
\usepackage{textcomp}                   % For text companion symbols
\usepackage{xspace}                     % For intelligent spacing
\usepackage{enumitem}                   % For enhanced lists
\usepackage{mathrsfs}                   % For curly script fonts
\usepackage{bm}                         % For bold math symbols
\usepackage{ifthen}                     % For conditional logic (needed for commenting system)
\usepackage{tikz}
\usetikzlibrary{positioning}
\usetikzlibrary{arrows.meta}
\usetikzlibrary{calc}

\usepackage{cleveref}

\usepackage{siunitx}
\DeclareUnicodeCharacter{2605}{\ensuremath{\bigstar}}
\DeclareUnicodeCharacter{2212}{\ensuremath{-}}

\newboolean{draft_comments}
\setboolean{draft_comments}{false}
\ifthenelse{\boolean{draft_comments}}
{
  \newcommand{\Jake}[1]{\textcolor{blue}{[JAKE: #1]}}
  \newcommand{\Michael}[1]{\textcolor{green}{[MICHAEL: #1]}}
  \newcommand{\Melanie}[1]{\textcolor{red}{[MELANIE: #1]}}
  \newcommand{\Matthew}[1]{\textcolor{magenta}{[MATTHEW: #1]}}
  \newcommand{\todo}[1]{\textcolor{orange}{[TODO: #1]}}
}
{
  \newcommand{\Jake}[1]{}
  \newcommand{\Michael}[1]{}
  \newcommand{\Melanie}[1]{}
  \newcommand{\Matthew}[1]{}
  \newcommand{\todo}[1]{}
}

\input{acronyms.tex}

\newcommand{\castle}{\ac{CaStLe}\xspace}
\newcommand{\mcastle}{\ac{M-CaStLe}\xspace}

\title{M-CaStLe: Uncovering Local Causal Structures in\\Multivariate Space-Time Gridded Data}

\author{\name J. Jake Nichol \email jefnich@sandia.gov \\
      \addr Scientific Machine Learning, Center for Computing Research\\
      Sandia National Laboratories
      \AND
      \name Michael Weylandt \email michael.weylandt@baruch.cuny.edu \\
      \addr Zicklin School of Business \\
      Baruch College, CUNY
      \AND
      \name G. Matthew Fricke \email mfricke@unm.edu\\
      \addr Department of Computer Science \\
      University of New Mexico
      \AND
      \name Jhayron Perez-Carrasquilla \email jhayron@umd.edu\\
      \addr Department of Atmospheric \& Oceanic Science \\
      University of Maryland
      \AND
      \name Melanie E. Moses \email melaniem@unm.edu\\
      \addr Department of Computer Science \\
      University of New Mexico\\
      \addr Santa Fe Institute}

\def\month{MM}  % Insert correct month for camera-ready version
\def\year{YYYY} % Insert correct year for camera-ready version
\def\openreview{\url{https://openreview.net/forum?id=XXXX}} % Insert correct link to OpenReview for camera-ready version

\begin{document}

\maketitle

\begin{abstract}
Causal graph discovery for space-time systems is challenging in high-dimensional gridded data, which often has many more grid cells than temporal observations per cell. The \castle meta-algorithm was developed to address that niche under space-time locality and stationarity assumptions, but it is currently limited to univariate analyses. In this work, we present \mcastle. \mcastle generalizes the local embedding and parent-identification phases of \castle to jointly model local within-variable and cross-variable space-time causal structures in gridded data. Like \castle, by constraining candidate parents to a constant-size space-time neighborhood and pooling spatial replicates, \mcastle increases effective sample size to make discovery tractable in high-dimensional settings. We further decompose the resulting multivariate stencil graph into reaction and spatial graphs to aid interpretation in complex settings.

We study \mcastle in four settings: a multivariate space-time \acl{VAR} benchmark with known ground truth, an \acl{ADR} \acl{PDE} verification problem with derived physical reference structure, an atmospheric chemistry case study in a low-temporal-sample regime, and an \acl{ENSO} study on reanalysis data, identifying phase-dependent ocean--atmosphere coupling. Across these settings, \mcastle more accurately recovers multivariate causal structure in controlled settings and identifies important physical dynamics in real-world case studies. Overall, \mcastle advances causal discovery for multivariate space-time systems while retaining interpretability at the grid level.
\end{abstract}

\acresetall

\section{Introduction}
Causal discovery estimates causal structure from observational data when controlled interventions are infeasible~\citep{Glymour.2019.10.3389/fgene.2019.00524}. This setting is common in large-scale space-time systems, where experiments may be unethical, impractical, or impossible~\citep{Runge.2023.10.1038/s43017-023-00431-y}. Causal discovery methods are now widely used across the sciences, including Earth, health, and social systems~\citep{Ebert-Uphoff.2012.10.1175/JCLI-D-11-00387.1,Cooper.2015.10.1093/jamia/ocv059,Runge.2019.10.1038/s41467-019-10105-3,Nowack.2020.10.1038/s41467-020-15195-y,Feder.2022.10.1162/tacl_a_00511,Zanga.2022.10.1016/j.ijar.2022.09.004,Sadeghi.2023.10.48550/arXiv.2312.17375}. Earth system science is a canonical example: with one Earth and limited opportunities for manipulation, causal discovery must operate directly on observational, often gridded, space-time datasets.

% Since the advent of Granger causality~\citep{Granger.1969.10.2307/1912791}, the Rubin causal model~\citep{Rubin.2019.10.1080/24709360.2019.1670513}, causal graphs~\citep{Pearl.2016}, and the \ac{PC} algorithm~\citep{Spirtes.1993.10.1007/978-1-4612-2748-9}, causal inference and causal discovery of observed data have developed into a rigorous mathematical framework. Today, causal discovery has become a rich literature with many algorithms and applications throughout the sciences~\citep{Glymour.2019.10.3389/fgene.2019.00524,Runge.2023.10.1038/s43017-023-00431-y}, including the health, Earth, and social sciences~\citep{Ebert-Uphoff.2012.10.1175/JCLI-D-11-00387.1,Cooper.2015.10.1093/jamia/ocv059,Runge.2019.10.1038/s41467-019-10105-3,Nowack.2020.10.1038/s41467-020-15195-y,Feder.2022.10.1162/tacl_a_00511,Zanga.2022.10.1016/j.ijar.2022.09.004,Sadeghi.2023.10.48550/arXiv.2312.17375}. Finally, causal representation learning is an exciting nascent field that merges the flexibility and predictive power of machine learning with causal discovery techniques~\citep{Scholkopf.2021.10.1109/jproc.2021.3058954}.

% This work presents \mcastle, a causal discovery approach for multivariate space-time systems with gridded data. 
Dense gridded datasets generally enable the analysis of continuous effects over space, since they are regular and complete throughout the grid. However, these systems come with dimensionality challenges. Frequently, the number of grid cells scales faster than the number of temporal samples per grid cell~\citep{Runge.2019.10.1038/s41467-019-10105-3}. Atmospheric data exemplify this challenge: many variables on hundreds of thousands of grid cells with orders of magnitude fewer temporal observations per cell. That imbalance is one aspect of the \emph{curse of dimensionality}~\citep{Bellman1957,Buhlmann.2011.10.1007/978-3-642-20192-9_6}, where high-dimensionality relative to sample size challenges conventional statistical methods. It renders many forms of inference, including causal discovery, unreliable without exploiting spatial marginalization, strong regularization, or additional structure (e.g., locality/stationarity).

\citet{Nichol.2025.10.1029/2024jh000546} developed \castle, which can efficiently identify local causal relationships of a given quantity in high-dimensional space-time systems where traditional approaches fail. \castle leverages spatial causal regularities to transform the causal discovery problem.
% It converts a high-dimensional space with many graph nodes and limited observations into an embedding with fewer nodes and more abundant observations.
However, many scientific questions in complex space-time systems require analysis of multiple quantities per grid cell, such as temperature and soil moisture in Earth system monitoring of drought conditions~\citep{Sun.2021.10.3390/rs13224638} or infection dynamics in epidemiological modeling using infection severity, duration of infection, and population age~\citep{Ganesan.2021.10.1038/s41598-021-86084-7, Paul.2021.10.1007/s40031-020-00517-x}. This work introduces \mcastle, which generalizes \castle from univariate to multivariate gridded space-time systems.

\subsection{Gap: Multivariate Grid-Level Causal Discovery}
While \castle discovers transport and propagation structures for a single gridded field, applying it variable-by-variable and inferring cross-variable effects afterward is inherently post-hoc, scales poorly with the number of variables, and can propagate errors into the inferred multivariate dynamics. This sequential approach also foregoes the benefits of joint estimation that are standard in time series causal discovery methods like \ac{PCMCI}~\citep{Runge.2019.10.1126/sciadv.aau4996}. Furthermore, learning space-time causal structures from each variable independently may miss cross-variable confounding, leading to space-time estimation errors and incorrect inference of the underlying physical process.

Joint estimation enables analysis of complex multi-step causal pathways that span both spatial transport and variable interactions. Consider atmospheric chemical processes, such as those from volcanoes or wildfires, leading to global climate impacts~\citep{Dutton.1992.10.1029/92gl02495,Labitzke.1992.10.1029/91gl02940,parker1996,Soden.2002.10.1126/science.296.5568.727}. Understanding coupled space-time dynamics, where species transport globally while simultaneously affecting others locally, requires joint estimation of both spatial propagation and cross-variable interactions. Similar multivariate space-time coupling occurs across computational fluid dynamics~\citep{Wimer.2023.10.1103/physrevfluids.8.110511}, spatiotemporal pharmacokinetics~\citep{Guarin.2021.10.1038/s41598-021-91612-6,Klingelhuber.2024.10.1038/s42255-024-01025-8}, and computational chemistry~\citep{Higham.2008.10.1137/060666457,Owen.2024.10.1115/1.4064494}. 

Generalizing \castle to this setting requires (i) a stencil representation that captures within- and cross-variable interactions while maintaining interpretability at scale, and (ii) an embedding construction that exposes cross-variable dependencies without losing the locality assumptions that make \castle computationally tractable.

\subsection{Contributions}
In this work, we present a multivariate generalization of the \castle meta-algorithm for grid-level causal discovery in multivariate space-time data. Building on \citet{Nichol.2025.10.1029/2024jh000546}, we develop and evaluate \mcastle with the following key advances:
\begin{itemize}[leftmargin=*, itemsep=2pt]
    \item \textbf{Multivariate \ac{LENS}}: Generalizes \castle's \ac{LENS} construction from univariate fields to $V$ variables per grid cell, pooling spatial replicates while preserving the local neighborhood organization needed for grid-level interpretation.
    \item \textbf{Multivariate \ac{PIP}}: Generalizes the \ac{PIP} to jointly identify the parents of all $V$ center variables in the local neighborhood, enabling simultaneous discovery of cross-variable and spatial causal effects.
    \item \textbf{Multivariate Causal Stencil Graph and Decomposition}: Introduces a multivariate causal stencil graph and proposes a decomposition into \emph{reaction} and \emph{spatial} graphs to separately summarize inter-variable coupling and transport structure.
    \item \textbf{Univariate Recovery as a Special Case}: Shows that the original univariate \castle formulation is recovered when each grid cell contains a single variable.
    \item \textbf{Evaluation Frameworks}: Develops multivariate space-time \ac{VAR} benchmarks for analytically known ground truth and an \ac{ADR} \ac{PDE} verification case with derived physical reference structure to assess recovery across increasing $V$ and diverse dynamical regimes.
    \item \textbf{Empirical Verification and Real-World Demonstration}: Verifies recovery of multivariate causal structure in synthetic benchmarks with known ground truth and in an \ac{ADR} verification case with derived physical reference structure, and demonstrates practical utility in two real-world case studies: Mt.\ Pinatubo sulfate chemistry and observed, phase-dependent \ac{ENSO} ocean--atmosphere coupling.
\end{itemize}

% Extending \castle to multivariate systems presents two key challenges beyond the issues already solved for univariate cases. First, multivariate stencil graphs must represent interactions both within and between variables while maintaining interpretability at scale. Second, the embedding construction must efficiently capture cross-variable dependencies without losing the locality assumptions that make \castle computationally tractable.

\subsection{Organization}
This paper is organized as follows. \Cref{sec:background} reviews related work and summarizes the \castle framework this work generalizes. \Cref{sec:methods} presents \mcastle, including the multivariate \ac{LENS}, the multivariate \ac{PIP}, the multivariate causal stencil graph, and its decomposition into spatial and reaction graphs. \Cref{sec:math_foundations} provides theoretical motivation and analyzes computational and sample complexity. \Cref{sec:benchmarks} evaluates \mcastle in two controlled studies: synthetic multivariate space-time \acp{VAR} (\Cref{sec:VAR}) and on an \ac{ADR} \ac{PDE} verification case with derived physical reference structure (\Cref{sec:adr_dynamics}). \Cref{sec:case_studies} presents two real-world applications: an atmospheric chemistry case study (\Cref{sec:atmospheric_chemistry}) and observed \ac{ENSO} dynamics (\Cref{sec:enso}). We conclude with discussion, limitations, and future directions in \Cref{sec:discussion}.

\section{Related Work}
\label{sec:background}
Causal discovery in space-time systems has long had a fundamental tension between grid-level analysis and computational tractability~\citep{Tibau.2022.10.1017/eds.2022.11}. \citet{EbertUphoff.2012.10.1029/2012gl053269} led the application of causal discovery to Earth systems at the grid level using the \ac{PC} algorithm~\cite{Spirtes.1991.10.1177/089443939100900106}. However, they faced severe dimensionality and computational constraints, and limited their analysis to coarse grids. Their work highlighted the immediate challenges of causal discovery on space-time systems: computational expense and aggregation effects on neighboring grid cells.

To accommodate these barriers, recent work uses spatial dimensionality reduction techniques, e.g. \ac{PCA}~\citep{Weylandt.2024.10.1175/JCLI-D-23-0267.1} and $\delta$-MAPS~\citep{Fountalis.2018.10.1007/s41109-018-0078-z}. \ac{PCMCI}~\citep{Runge.2019.10.1126/sciadv.aau4996} is usually applied using spatial weighted averages or \ac{PCA}-varimax~\citep{Runge.2015.10.1038/ncomms9502, Kretschmer.2016.10.1175/jcli-d-15-0654.1,Runge.2019.10.1038/s41467-019-10105-3, Capua.2019.10.5194/esd-11-17-2020, Capua.2020.10.5194/wcd-1-519-2020, Krich.2020.10.5194/bg-17-1033-2020}. In short, dimensionality reduction is used to capture time series signals over large regions that represent global-scale phenomena, such as the \ac{ENSO}. This procedure is demonstrably effective for identifying large-scale patterns such as climate teleconnections~\citep{Wallace.1981.10.1175/1520-0493109<0784:titghf>2.0.co;2,Tibau.2022.10.1017/eds.2022.11,Galytska.2022.10.1002/essoar.10512569.1}, but necessarily eliminates local grid-level interactions. While large-scale patterns are important aspects of study in complex systems, the nature of their emergence is also important to understand. Local interactions determine the location and magnitude of larger patterns and other mid-scale phenomena, making them unsuitable for studying transient, spatially-evolving phenomena, such as wildfires, volcanic plumes, weather fronts, and storms~\citep{Nichol.2025.10.1029/2024jh000546}.
% While effective for large-scale phenomena, dimensionality reduction fundamentally destroys local multivariate space-time processes such as transport and chemistry, making them unsuitable for studying transient, spatially-evolving phenomena, such as wildfires, volcanic plumes, weather fronts, and storms~\citep{Nichol.2025.10.1029/2024jh000546}.
% The space-time dynamics identified are global teleconnection networks~\citep{Wallace.1981.10.1175/1520-0493109<0784:titghf>2.0.co;2,Tibau.2022.10.1017/eds.2022.11,Galytska.2022.10.1002/essoar.10512569.1}. 

In contrast to using dimensionality reduction, two prior works have explicitly approached local-scale space-time causal discovery. \citet{Zhu.2016.10.1109/tbdata.2017.2723899} developed pg-Causality, which seeks local dependencies in the space-time propagation of air pollutants in urban settings. pg-Causality was applied to multivariate data. \citet{Sheth.2022.10.1109/bigdata55660.2022.10020845} developed STCD for hydrological systems and leveraged known spatial structure for causal discovery, but it was not applied to multivariate data. While both used spatial structure for local analysis, both are designed for sparse sensor networks rather than dense, continuous space-time grids. Unlike sensor networks with fixed monitoring locations, gridded Earth system data presents thousands of spatially-organized cells with consistent neighborhood structure, requiring fundamentally different approaches. The challenges of performing causal discovery on high-dimensional multivariate gridded space-time systems at the grid level remain unresolved.

\citet{Nichol.2025.10.1029/2024jh000546} introduced \castle as a scalable framework for causal discovery of local grid-level dynamics in high-dimensional spatiotemporal systems. Box~1 summarizes the method, and Appendix \ref{app:u_castle} provides a deeper walk-through. In synthetic \ac{VAR} benchmarks, \castle substantially outperformed standard causal discovery approaches, including \ac{PC}, \ac{PCMCI}, \ac{DYNOTEARS}, and Granger/FullCI, particularly in low-sample regimes where unstructured methods were effectively at chance. Crucially, both formal analysis and experiments showed that \castle benefits from larger spatial domains: by exploiting repeated local structure, it converts spatial extent into statistical power, reversing the usual curse of dimensionality. Experiments on Burgers' equation further showed that \castle can recover advective dynamics in nonlinear continuous \ac{PDE} systems, whereas dimensionality-reduction pipelines such as PCA-varimax+\ac{PCMCI} failed to recover the correct directional structure. In Earth system applications, \castle recovered causal stencils from aerosol optical depth fields generated by simulations of the 1991 Mt.\ Pinatubo eruption that were consistent with known stratospheric wind patterns in both the simplified \ac{HSW-V} model and the fully coupled \ac{E3SMv2-SPA} model; by contrast, direct application of \ac{PC} to the same gridded data produced dense, largely uninterpretable graphs. These results established \castle as an effective framework for recovering local causal dynamics from gridded spatiotemporal data, but only in the univariate setting. The present work presents a multivariate generalization of \castle, broadening the framework from univariate fields to joint causal discovery over multiple co-located variables.

\begin{figure}[ht]
    \centering
    \includegraphics[width=\textwidth]{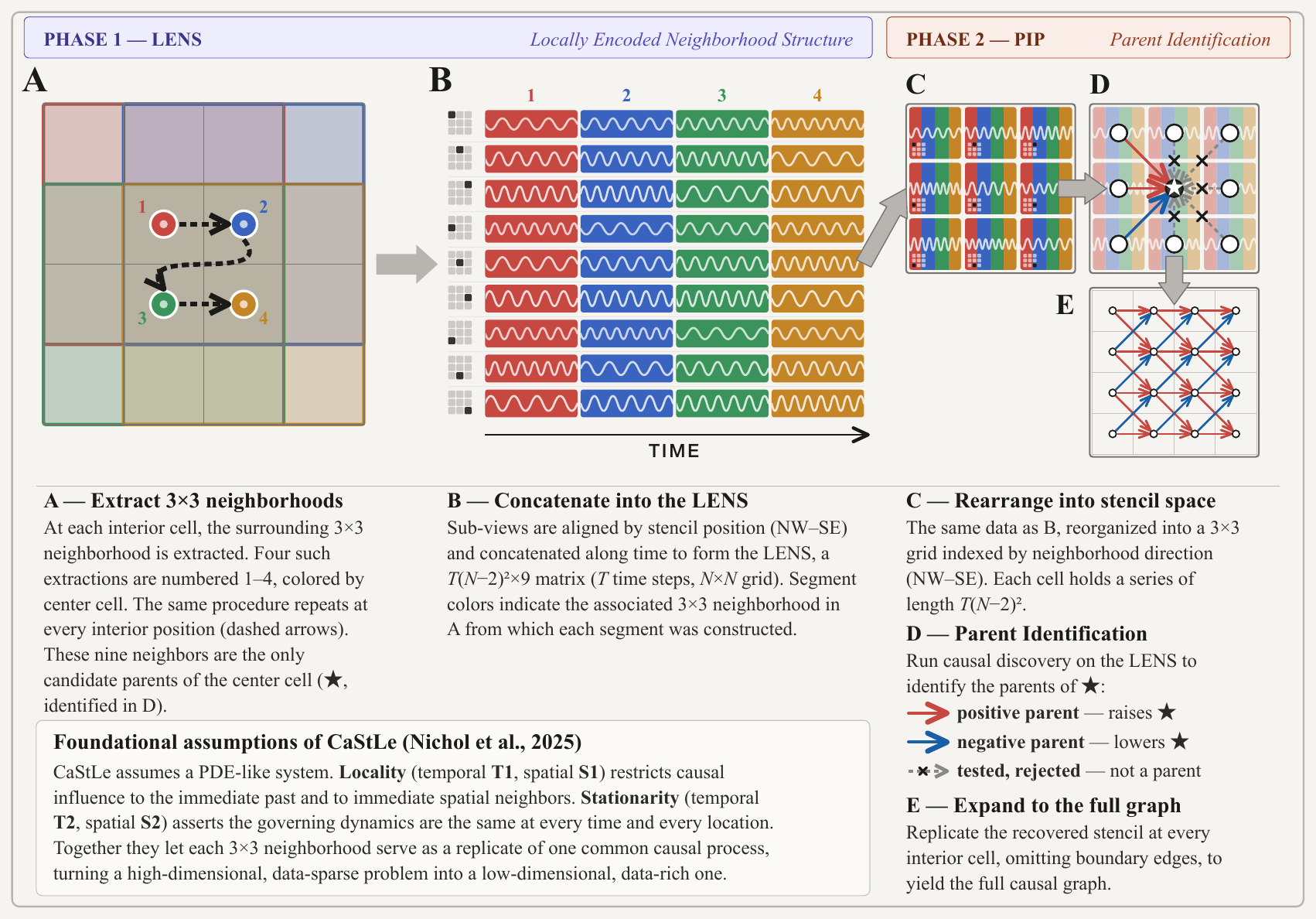}
    \caption{\citet{Nichol.2025.10.1029/2024jh000546}'s Univariate \castle at a Glance. See Appendix \ref{app:u_castle} for more details.}
    \label{fig:castle_box}
\end{figure}

\section{Proposed Method: \mcastle}
\label{sec:methods}
Multivariate \castle (\mcastle) is a multivariate generalization of \castle for discovering local space-time causal structures in gridded data. When each grid cell contains only a single variable, \mcastle reduces to the original univariate \castle framework. \mcastle produces the multivariate causal space-time stencil graph, which describes how a \emph{set of variables} interact within their Moore neighborhood over time. Because of this step in complexity, the multivariate stencil is often challenging to interpret immediately. To improve interpretability, we present the \textit{reaction graph} and the \textit{spatial graph}, which decompose the multivariate stencil output by \mcastle into a graph of inter-variable relationships (without a spatial aspect) and a graph of spatial relationships (without variable relationships).

\subsection{Mathematical Notation and Assumptions}
\mcastle can be applied to regular, fixed rectangular grids. For simplicity of exposition, we present the method and theory for a square $N\times N$ spatial grid, where $N$ denotes the grid dimension, with $T$ time samples and $V$ variables per grid cell. The extension to rectangular $N_x\times N_y$ grids is straightforward. The input data are represented by a tensor $\mathbf{X}\in\mathbb{R}^{N\times N\times V\times T}$. For a grid cell $r=(i,j)$, we denote by $\mathcal{N}(r)$ its Moore neighborhood, i.e., the set containing $r$ and its eight adjacent neighbors.

Building on the foundation that complex physical systems often exhibit consistent local dynamics, which develop into emergent to global behaviors, \mcastle applies to multivariate space-time domains characterized by locality and stationarity constraints. These assumptions enable an embedding that makes causal discovery tractable in high-dimensional gridded data, while preserving the essential causal structure. Below are the assumptions outlined by  \citet{Nichol.2025.10.1029/2024jh000546} for \castle, generalized for multivariate systems. For $V=1$, these assumptions reduce exactly to those used in the original univariate \castle formulation.

\begin{enumerate}[label={\bfseries T\arabic*)}]
    \item \emph{Temporal Locality}: direct causal parents occur only at lag $1$. Equivalently, for any $\tau \neq 1$, $X^{v}_{i,j,t-\tau} \not\to X^{w}_{k,l,t}$ for any variables $v,w$ and spatial coordinates $(i,j), (k,l)$.
    \item \emph{Temporal Causal Stationarity}: the dynamics governing evolution do not change over time. That is, $X^{v}_{i,j,t-1} \to X^{w}_{k,l,t} \Leftrightarrow X^{v}_{i,j,t-1+\tau} \to X^{w}_{k,l,t+\tau}$ for any time offset $\tau$.
\end{enumerate}
\begin{enumerate}[label={\bfseries S\arabic*)}]
    \item \emph{Spatial Locality}: if $(i,j)$ and $(k,l)$ are not neighbors, then $X^{v}_{i,j,t_1} \not\to X^{w}_{k,l,t_2}$ for any variables $v,w$ and times $t_1, t_2$.
    \item \emph{Spatial Causal Stationarity}: the dynamics do not change over space. That is, $X^{v}_{i,j,t-1} \to X^{w}_{k,l,t} \Leftrightarrow X^{v}_{i+\Delta_i,j+\Delta_j,t-1} \to X^{w}_{k+\Delta_i,l+\Delta_j,t}$ for any spatial offset $\Delta=(\Delta_i,\Delta_j)$ such that all shifted indices remain in the domain.
\end{enumerate}

These constraints limit interactions to neighboring cells within a one-step Moore neighborhood and adjacent time steps ($\tau = 1$), while ensuring uniform dynamics across space and time. Under these assumptions, multivariate systems can be represented as \acp{SCM} where each variable depends only on its Moore neighborhood across all variables at the previous time step (see \Cref{sec:math_foundations} for formal \ac{SCM} representation). Standard causal discovery assumptions (Markov condition, faithfulness) also apply~\citep{Spirtes.1993.10.1007/978-1-4612-2748-9}, though the locality constraints enable relaxation of the typically required causal sufficiency assumption~\citep{Nichol.2025.10.1029/2024jh000546}. These mathematical foundations are critical for \mcastle's computational tractability: the locality constraints transform an intractable $\mathcal{O}(TN^6V^3 2^{N^2V})$ complexity problem into a feasible $\mathcal{O}(TN^2V^3 2^{9V})$ approach (detailed complexity analysis in \Cref{subsec:complexity_analysis}).

\subsection{\mcastle}
\mcastle generalizes both phases of the \castle meta-algorithm to construct a multivariate \ac{LENS} and to recover both space-time and inter-variable dependencies within that representation. Input data consists of $V$ variables measured on an $N \times N$ grid over $T$ time steps, yielding a tensor $\mathbf{X} \in \mathbb{R}^{N \times N \times V \times T}$. \Cref{fig:methods_schematic} depicts each step of \mcastle. In this example, we illustrate a simple $4$$\times$$4$ original grid space, $G$, which has $V=3$ locally interacting variables, $a$, $b$, and $c$, with $T=500$ time samples.

\begin{figure*}[ht]
    \centering
    \includegraphics[width=\textwidth]{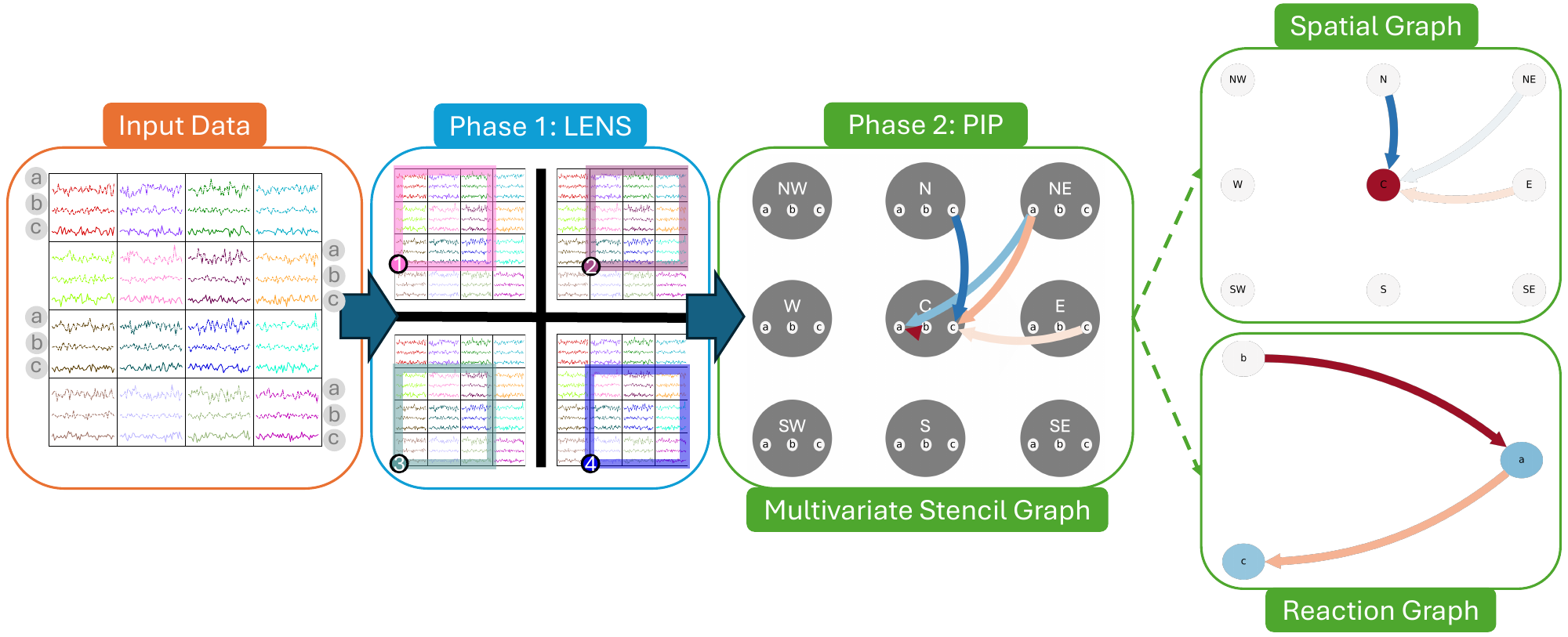}
    \caption{A schematic diagram of the input, computational phases, and output of \mcastle. Similar to \castle's procedure (c.f. Box~1), the first phase collects local neighborhood information into the \ac{LENS}, which now collects information for each variable's time series in each grid cell. The second phase applies the \ac{PIP} to every variable at every position in the \ac{LENS} to determine which variables cause the center variables from each location in the \ac{LENS}. Finally, the resulting multivariate stencil graph can be decomposed into the \emph{spatial graph} and \emph{reaction graph} for improved interpretability and potential analysis.}
    \label{fig:methods_schematic}
\end{figure*}

\subsubsection{Phase 1: \acf{LENS}}
Phase 1 builds the multivariate \ac{LENS} just as in the univariate case, except that at every position in the $3\times 3$ neighborhood it records a separate time series for each of the $V$ variables. The univariate \ac{LENS} is a $3$$\times$$3$ matrix where each element contains one time series per $N_\text{rep}$ interior spatial replicates, for a total length $L = T N_\text{rep}$. The multivariate \ac{LENS} generalizes the univariate version by including $V$ variables, forming a tensor in $\mathbb{R}^{3 \times 3 \times V \times L}$. In \Cref{fig:methods_schematic}, Phase 1 depicts the process the \ac{LENS} construction follows to collect time series from each Moore neighborhood as the window slides across $G$. It collects all three variables from each grid cell within the neighborhood window and concatenates them to the \ac{LENS}, according to their position relative to the center of the neighborhood window and the respective variable in each position. Like the univariate case, the multivariate \ac{LENS} retains all data without marginalization, thus being fully invertible.

\subsubsection{Phase 2: \acf{PIP}}
In univariate \castle, the \ac{PIP} adapts a given time series causal discovery algorithm to seek the parents of only the center cell in the \ac{LENS}. To generalize this step, the parent-identification problem is expanded from a single center variable to all $V$ center variables jointly. Rather than allowing a single target variable in the center position, we now allow all $V$ center variables at that position to serve as targets. Formally, let the center-position target set be
\[
    \mathcal{C} = \{X_{0,0,v,t} : v \in \{1,\dots,V\}\},
\]
where stencil coordinates are indexed relative to the center cell. The candidate parent set is
\[
    \mathcal{P} = \{X_{\delta_i,\delta_j,u,t-1} : \delta_i,\delta_j\in\{-1,0,1\},\ u\in\{1,\dots,V\}\},
\]
which contains $9V$ lag-1 candidate parents. The multivariate \ac{PIP} seeks, for each center variable $X_{0,0,v,t}\in\mathcal{C}$, the subset of $\mathcal{P}$ that is its direct parent set. Thus, any lag-1 variable in any stencil position is a potential direct parent of any center variable. The result is a multivariate stencil graph, such as the one depicted in the third panel of \Cref{fig:methods_schematic}. This example illustrates a stencil of three variables with dependencies between them over space and time.

\subsubsection{Decomposition}
\label{subsubsec:decomposition}
While the stencil in \Cref{fig:methods_schematic} may be interpretable after careful viewing, multivariate stencils of more variables or with more dependencies can be challenging to parse visually. For that reason, we have developed a decomposition scheme to separately analyze the variable interactions and the spatial structure of all variables. The far right of \Cref{fig:methods_schematic} illustrates the spatial graph and reaction graph corresponding to the stencil on their left.

Computing the stencil decomposition is straightforward and similar for both the spatial and reaction graphs. To compute the spatial graph, the stencil links are aggregated along the variable dimension, and the location from which they originate is preserved. For example, in \Cref{fig:methods_schematic}, two links are coming from the NE position to the center, a negative dependence (light blue) via $a \rightarrow a$ and a positive dependence (orange) via $a \rightarrow c$, and both of those are aggregated to find one weakly negative link $\text{NE} \rightarrow \text{C}$ in the spatial graph. Note that there is a $b_{\mathrm{center}} \rightarrow a_{\mathrm{center}}$ link in the stencil and that it is represented as an autodependence link in the spatial graph, illustrated by the center graph node's coloring. The node and link colors associate with continuous link dependence strengths (omitted for simplicity) that is output by \mcastle's \ac{PIP}.

The reaction graph is computed by aggregating stencil links along the spatial dimension while preserving the variable dimension. This results in a graph of variables that represents the aggregate strengths of dependencies from any direction. For example, in \Cref{fig:methods_schematic}, there are two $c \rightarrow c$ links, one in the N location and one in the E location, strongly negative (blue) from the N and weakly positive (red) from the E. These are aggregated to form the light-blue $c$ node in the reaction graph.

To aggregate the stencil link coefficients, we use Fisher's z-transformation. It stabilizes the variance of the correlation coefficients, making them more suitable for averaging. The process involves converting each coefficient into a z-score, computing the arithmetic mean of the z-scores, and then converting the average z-score back to a correlation coefficient using the inverse Fisher's z-transformation. This method ensures that the combined value accurately reflects the underlying dependencies between variables.

\section{Theory and Complexity Analysis}
\label{sec:math_foundations}
This section establishes the mathematical foundations for the \mcastle framework, which enables tractable causal discovery in high-dimensional space-time gridded data. We formalize the structural causal model assumptions underlying our approach and demonstrate how locality constraints fundamentally transform the computational complexity from intractable to feasible scales. The theoretical framework presented here clarifies the assumptions underlying our causal inference approach and its computational advantages over standard multivariate methods.

Under the \mcastle assumptions, multivariate systems can be represented through \acp{SCM} of the form:
\begin{equation}
X^{v}_{i,j,t} = f^{v}_{i,j}(\mathbf{X}^{1:V}_{\mathcal{N}(i,j),t-1}, \eta^{v}_{i,j,t})
\end{equation}
where $\mathbf{X}^{1:V}_{\mathcal{N}(i,j),t-1}$ represents all $V$ variables in the neighborhood of location $(i,j)$ at the previous time step, and $\eta^{v}_{i,j,t}$ captures stochastic innovations. Under spatial causal stationarity, $f^{v}_{i,j} = f^{v}$ for all locations. This formulation strictly generalizes the univariate theory of \citet{Nichol.2025.10.1029/2024jh000546}, which is recovered when $V=1$.

\subsection{Effective Sample Size}
A major strength of \mcastle is that it leverages spatial replicates to achieve an effective sample size of $\Theta(T N^2)$ for an $N \times N$ grid with $T$ temporal samples per location.

This quadratic scaling in $N$ is the key to recovering causal structure even when $T$ is small. For example, in the Mt.\ Pinatubo case study, each location has only $T = 7$ temporal samples, but pooling $\Theta(N^2)$ spatial replicates yields orders of magnitude more effective samples. 

We first consider the idealized case where spatial replicates are independent, which yields the scaling results below. We then relax this assumption to account for spatial dependence. 

\begin{theorem}[Sample complexity with independent spatial replicates]
\label{thm:sample-complexity}
Assume Temporal Locality (T1), Temporal Causal Stationarity (T2), Spatial Locality (S1), and Spatial Causal Stationarity (S2), with Markov and faithfulness conditions, independent finite‐variance errors, and \emph{independent} spatial replicates. 

In \ac{PIP}, restrict candidate parents of the center variables to the $3\times 3$ Moore neighborhood across all $V$ variables at lag~1. Let $m(\varepsilon,\delta,9V)$ denote the sample requirement for the time‐series causal discovery algorithm used in \ac{PIP} at feature dimension $p=9V$. Then each local parent‐selection problem in \ac{PIP} has effective sample size
\begin{equation}
\label{eq:sample_scaling}
L = T\,N_\text{rep}
\end{equation}
where $T$ is the number of temporal samples per grid cell and $N\times N$ is the spatial grid dimension, and $N_\text{rep}=(N-2)^2$ are the number of interior grid cells. In particular, if
\begin{equation}
T \ge \frac{m(\varepsilon,\delta,9V)}{N_\text{rep}}
\end{equation}
then \mcastle identifies each edge correctly with probability at least $1 - \delta$, up to an error of $\varepsilon$.
\end{theorem}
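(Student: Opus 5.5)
The argument reduces the grid-level problem to a single fixed-dimensional time-series problem fed with $L = T N_\text{rep}$ i.i.d.-like samples, and then invokes the assumed guarantee $m(\varepsilon,\delta,9V)$ of the base algorithm. There are three steps: (i) show that the \ac{LENS} pooling produces samples from one common conditional law, (ii) count them, and (iii) transfer the base algorithm's guarantee edge by edge.

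\textbf{Step 1: one common local law.} Fix an interior cell $r=(i,j)$ and a time $t\ge 2$. By the \ac{SCM} representation above, under (T1) and (S1) each center variable $X^{v}_{i,j,t}$ is a function only of the lag-1 neighborhood block $\mathbf{X}^{1:V}_{\mathcal{N}(i,j),t-1}$ and the innovation $\eta^{v}_{i,j,t}$. Together these lag-1 inputs are exactly the $9V$ elements of the candidate set $\mathcal{P}$ in stencil coordinates. By (S2), $f^{v}_{i,j}=f^{v}$ for every cell, and by (T2) the map does not depend on $t$. Each pair (target vector at $(r,t)$, candidate vector at $(r,t-1)$) is therefore a draw from the same structural relation, with the same true parent set $\mathrm{Pa}(v)\subseteq\mathcal{P}$ for each $v$.

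The Markov and faithfulness conditions hold for the local \ac{SCM}. Under them, the conditional independencies among $\mathcal{C}\cup\mathcal{P}$ identify $\mathrm{Pa}(v)$. Restricting candidates to $\mathcal{P}$ loses no true parents, because (S1) and (T1) place all direct causes inside $\mathcal{P}$. This is where the relaxation of causal sufficiency enters: no excluded variable is a direct cause.

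\textbf{Step 2: counting samples.} The \ac{LENS} stacks one such pair per interior cell and per time index. There are $N_\text{rep}=(N-2)^2$ interior cells, since boundary cells lack a full Moore neighborhood, and $T$ time indices (ignoring the $O(1)$ loss from the first lag). This gives $L=T N_\text{rep}$ rows for a problem of feature dimension $p=9V$. Under the independent-replicates hypothesis, rows from distinct cells are independent. The temporal structure within a cell is exactly what the base algorithm's requirement $m(\cdot)$ is stated for, so the pooled data is a valid input of length $L$.

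\textbf{Step 3: transferring the guarantee.} If $L\ge m(\varepsilon,\delta,9V)$, which is equivalent to the stated inequality $T\ge m(\varepsilon,\delta,9V)/N_\text{rep}$, the base algorithm's guarantee applies directly. Each edge in each local parent-selection problem is recovered with probability at least $1-\delta$ up to error $\varepsilon$. By stationarity, the recovered stencil is the grid-level causal structure, so the same probability holds for every grid-level edge it encodes.

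\textbf{Main obstacle.} The delicate point is Step 2: justifying that $m(\cdot)$, which is defined for a single time series, applies to the concatenation of $N_\text{rep}$ independent series. The splice points between series create spurious lag-1 pairs. I would handle this by forming lagged pairs within each replicate before concatenation, or by noting that there are only $N_\text{rep}$ such pairs, which is negligible. I would then argue that independent blocks with the same law give a sample at least as informative as one series of length $L$. A second caveat is that independence across neighboring replicates is idealized, since overlapping neighborhoods share cells. The theorem assumes this independence, and the dependent case is deferred to the subsequent relaxation.
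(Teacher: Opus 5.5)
Your proposal is correct and follows essentially the same route as the paper's proof. In both, (T1) and (S1) restrict candidates to the $9V$ lag-1 neighborhood variables, (T2) and (S2) make the $(N-2)^2$ interior windows replicates of one local mechanism, the independence hypothesis yields $L = T N_\text{rep}$ samples, and $L \ge m(\varepsilon,\delta,9V)$ transfers the base algorithm's guarantee. Your version is somewhat more careful than the paper's, which simply declares the $L$ window samples i.i.d.\ and does not address the within-cell temporal dependence or the splice points you flag.
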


\begin{proof}
(i) By T1 and S1, any parent of a center variable at time $t$ must lie in its Moore neighborhood at time $t-1$, so the candidate set has $9V$ variables. 
(ii) By T2 and S2, the joint distribution of $(X_{\text{neigh},t-1}, X_{\text{center},t})$ is identical for each interior window position, making the $(N-2)^2$ windows exchangeable replicates of the same local mechanism. 
(iii) Under the independence assumption, sliding the window over $T$ time steps yields $L = TN_\text{rep}$ i.i.d.\ replicates in the LENS without marginalisation. 
(iv) Since the time‐series causal discovery algorithm used in PIP’s sample bound $m(\varepsilon,\delta,9V)$ applies to these replicates, the stated inequality on $T$ ensures the error and confidence guarantees.
\end{proof}

\subsubsection{Replicate dependence} If correlations between replicates fall with distance, $k$ , the effective sample size is reduced only by a constant factor $1/C$, where $C$ is the \emph{design effect}, also known as the variance inflation factor (VIF). Following  \citet{bretherton1999effective}, let $\rho_k$ denote the average correlation between spatial replicates separated by $k$ grid steps (stencil units) and $N_\text{rep}$ is the number of independent spatial replicates, then the effective number of independent samples is,
\begin{equation}
\label{eq:vif_1}
N_{\mathrm{eff}} = \frac{N_\text{rep}}{\mathrm{VIF}}
\end{equation}
where
\begin{equation}
\label{eq:vif_2}
\mathrm{VIF} = 1 + 2 \sum_{k=1}^{N-1} \rho_k \left( 1 - \frac{k}{N_\text{rep}} \right)
\end{equation}
and $\rho_k$ is the lag-$k$ autocorrelation. For spatial replicates, this can be expressed as
\begin{equation}
\label{eq:de_1}
\mathrm{DE} = 1 + 2 \sum_{k=1}^\infty \rho_k
\end{equation}
where $\rho_k$ is the average correlation at lag $k$ (in stencil units). The effective sample size is then
\begin{equation}
\label{eq:de_2}
L_{\mathrm{eff}} = \frac{L}{\mathrm{DE}}
\end{equation}
so dependence multiplies error by $\Theta(\sqrt{\mathrm{DE}})$ while leaving the $\Theta(N^2)$ scaling intact.

If dependence {\it does not} fall with $k$, for example $\rho_k \sim k^{-\alpha}$ with $0 < \alpha < 1$, then $\mathrm{DE} = \Theta(N^{1-\alpha})$ and $L_{\mathrm{eff}} = \Theta(N^{1+\alpha})$, which is less than quadratic. In the extreme case where all values are the same in a row or column then, $\mathrm{DE} = \Theta(N)$ and $L_{\mathrm{eff}} = \Theta(N)$, eliminating the quadratic gain from spatial replication entirely.

The resulting effective sample size is $L_{\mathrm{eff}} = L / \mathrm{DE}$, which reduces the bound on $T$ by a constant factor when correlations between spatial replicates decay sufficiently fast. 
If correlations decay slowly (e.g.\ $\rho_k \sim k^{-\alpha}$ with $0 < \alpha < 1$), $L_{\mathrm{eff}}$ scales sub-quadratically with $N$ as discussed below.

\subsubsection{Dependence from sliding windows.}
The proof of \Cref{thm:sample-complexity} given above treats the $L = TN_\text{rep}$ window replicates as independent. 
In reality, overlapping $3\times 3$ Moore windows share lagged cells, inducing short-range spatial dependence. 
If $\rho_{i,j}$ is the mean correlation between windows shifted $i$ stencil units horizontally and $j$ units vertically, the design effect $\mathrm{DE}$ from \Cref{eq:de_2} generalizes to two dimensions as
\begin{equation}
\mathrm{DE} = 1 + 2\sum_{\substack{i,j \ge 0 \\ (i,j) \neq (0,0)}} \rho_{i,j} \left(1 - \frac{i}{N_x}\right) \left(1 - \frac{j}{N_y}\right),
\end{equation}
where $N_x=N_y=N-2$ are the counts of window positions in each dimension.

For a $3\times 3$ stencil, the fractional overlaps are:
\begin{align}
\rho_{1,0} &= \rho_{0,1} = \frac{6}{9} \quad 
\rho_{1,1} = \frac{4}{9}\\
\rho_{2,0} &= \rho_{0,2} = \frac{3}{9} \quad 
\rho_{2,1} = \rho_{1,2} = \frac{2}{9}\\
\rho_{2,2} &= \frac{1}{9}  \quad 
\rho_{i,j} = 0 \quad \forall i \ge 3 \text{ or } j \ge 3
\end{align}
Substituting and summing gives
$\mathrm{DE} \approx 1.019$
so the effective sample size is

$L_{\mathrm{eff}} = \frac{L}{\mathrm{DE}} \approx 0.981 L$
Thus the penalty from short-range dependence is negligible: over 98\% of the $\Theta(N^2)$ scaling is retained.

\subsubsection{Relevance to Mt.\ Pinatubo.}
As discussed in \Cref{sec:atmospheric_chemistry}, for the Mt.\ Pinatubo dataset $N=30$ and $T=7$, the independent-replicate bound gives  
$L = T(N-2)^2 = 7 \cdot 28^2 = \SI{5488}{}$. 
Applying the exact $3\times 3$ overlap adjustment,
\[
L_{\mathrm{eff}} \approx 0.981 \times 5488 \approx \num{5384}.
\]
This is $\approx 769\times$ greater than the temporal depth at a single location, corresponding to a $\sqrt{5384/7} \approx 27.7\times$ reduction in per-edge statistical error. 
This underlies the rationale for the approach and accounts for the strong performance observed in our experiments.

\subsection{Computational Complexity Analysis}
\label{subsec:complexity_analysis}
Generalizing from the complexity analysis of \citet[Appendix B]{Nichol.2025.10.1029/2024jh000546}, we analyze the computational complexity of \mcastle compared to standard multivariate causal discovery approaches. Following \citet{Kalisch:2007}, the worst-case complexity of constraint-based causal discovery is $\mathcal{O}(np^32^p)$ for $n$ samples and $p$ variables. For multivariate gridded data with $N \times N$ spatial grid, $V$ variables per cell, and $T$ time samples, standard methods yield $p = N^2V$ total variables and $n = T$ samples, giving complexity
\begin{equation}
\mathcal{O}(T \cdot (N^2V)^3 \cdot 2^{N^2V}) = \mathcal{O}(TN^6V^3 2^{N^2V})
\end{equation}

The multivariate \ac{LENS} transformation creates $T(N-2)^2$ samples with $9V$ variables (Moore neighborhood $\times V$ variables). Crucially, we only seek parents for the $V$ center variables, yielding
\begin{equation}
\mathcal{O}(T(N-2)^2 \cdot (9V)^2 \cdot V \cdot 2^{9V}) = \mathcal{O}(TN^2V^3 2^{9V})
\end{equation}
where the $V^3$ factor decomposes as $(9V)^2$ for conditional independence test complexity and $V$ target variables requiring parent identification.

The improvement is dramatic in both polynomial and exponential terms: $N^2$ vs $N^6$ (quadratic vs sixth-order scaling) and $2^{9V}$ vs $2^{N^2V}$ (constant vs grid-dependent exponent). For a modest $30 \times 30$ grid with $V=3$ variables, naive approaches require evaluating $\approx 2^{2700}$ parent sets compared to \mcastle's $2^{27}$, representing a reduction of over $10^{800}$ in search space size. This fundamental advantage stems from \mcastle's locality constraint: each variable has exactly $9V$ potential parents regardless of grid size, compared to $N^2V$ parents in naive approaches.

While the above comparison analysis assumes exhaustive parent set enumeration, practical algorithms employ search optimizations (greedy selection, constraint propagation). However, \mcastle maintains proportional advantages since both approaches benefit equally from such optimizations. The key insight is that locality fundamentally constrains the search space independent of the specific search strategy.

\section{Simulation Studies: Linear and Nonlinear Scenarios}
\label{sec:benchmarks}
We first evaluate \mcastle in two controlled settings: a synthetic \ac{VAR} benchmark, where recovery can be measured exactly against known ground truth, and an \acf{ADR} \ac{PDE} verification problem, where physically meaningful reference structure is derived from the governing dynamics. The former uses randomly generated \acp{VAR} in which the modeling assumptions are exactly satisfied, while the latter examines a more realistic continuous system spanning a range of advection and diffusion settings.

\subsection{Spatial \ac{VAR} Benchmark}
\label{sec:VAR}
% We developed random and stable multivariate space-time systems with two spatial dimensions. They mathematically map to ground truth causal stencil graphs for evaluating the performance of \mcastle with a variety of system parameters. With these, we can compare \mcastle results with exact solutions.

Our methodology for generating benchmark data builds upon the work used by \citet{Nichol.2025.10.1029/2024jh000546}, which is more completely detailed by \citet{Nichol.2023.10.2172/1991387}. They developed a procedure for generating benchmark datasets of stable 2D space-time systems through the systematic construction of coefficient matrices parameterizing \acp{VAR} of order 1. Causal graphs have a direct mapping from \acp{VAR}~\citep{Peters.2017,Runge.2019.10.1126/sciadv.aau4996}, which enables precise benchmark comparisons between \ac{VAR} modeled data and causal discovery estimated graphs.

The space-time \ac{VAR} methodology initializes a $3$$\times$$3$ matrix defining local grid-level dynamics between neighbors, called the \ac{NDM}. Random \acp{NDM} of predetermined sparsity, $d$, are generated to describe how every grid cell in the space is dependent on the grid cells in its Moore neighborhood. To simulate an entire grid, the \ac{NDM} can be structurally mapped to an $\bm{A}$ matrix for the entire grid. For an $N$$\times$$M$ grid space, $\bm{A} \in \mathbb{R}^{NM \times NM}$. Finally, most 2D \acp{VAR} are not numerically stable. To ensure stability, $\rho(\bm{A}) < 1.0$, where $\rho(\bm{A})$ is the spectral radius of $\bm{A}$~\citep[p.307]{Strang}. Through the \ac{NDM} definition, \acp{VAR} can simulate locality in physical systems.

% \subsubsection{Experimental Setup: Multivariate Space-Time \acp{VAR}}
% \label{subsec:mv_vars}
To generate benchmark datasets, we generalized \citet{Nichol.2023.10.2172/1991387}'s space-time \ac{VAR} procedure to multivariate systems by defining local grid-level dynamics between variables. Stable systems were generated using an accept-reject scheme, ensuring predefined sparsity and minimum signal strengths. This approach enabled the generation of data across a range of grid sizes, variable counts, densities, and coefficients. See Appendix \ref{app:data_generation} for full details of the procedure, and \Cref{subsubsec:data_gen} and Appendix \ref{app:data_generation_parameters} for parameter ranges.

\paragraph{Cartesian-\castle Baseline}
To isolate the benefit of \mcastle's joint multivariate local estimation, we introduce Cartesian-\castle as an ablation baseline. Cartesian-\castle decomposes the multivariate problem into separate within-variable and cross-variable analyses. First, univariate \castle is run independently on each variable's gridded field, producing one univariate causal stencil per variable that captures within-variable space-time structure. Second, each variable is spatially aggregated to a single time series via its spatial mean, and a standard non-spatial multivariate causal discovery algorithm is applied to these aggregated series to estimate inter-variable causal links. These outputs are then assembled into a multivariate stencil graph by placing the univariate stencils on the block diagonal and inserting inter-variable links only between the center nodes of the corresponding variable blocks. In this way, Cartesian-\castle preserves within-variable spatial structure and cross-variable dependence, but it does not jointly estimate them in a shared multivariate neighborhood and cannot represent cross-variable spatial parents away from the center location. It therefore provides a stronger baseline than direct non-spatial application alone while isolating the contribution of \mcastle's multivariate LENS and joint \ac{PIP}.

% \subsubsection{Metrics}
% We evaluated \mcastle's performance using precision, recall, and F$_1$ score, which are standard metrics for binary classification tasks. In the context of causal graph reconstruction, precision measures the proportion of inferred edges that are correct, recall measures the proportion of true edges that are successfully recovered, and F$_1$ score is the harmonic mean of precision and recall, providing a balanced measure of overall performance. Formal definitions are provided in Appendix \ref{app:metrics}.

\subsubsection{Data Generation}
\label{subsubsec:data_gen}
We used \ac{VAR}s to simulate space-time structural causal models on a fixed $4\times4$ grid over $T=1000$ time points. The parameter ranges were: number of variables: $V \in \{1,2,3,4,5,6\}$; link density: $d = E/(9V^2)\in(0,1]$ for $E$ graph edges; coefficient magnitude: $\gamma\in[0.1,1.0]$. To guarantee bounded dynamics, each candidate network was subjected to a 48-hour stability accept–reject scheme: any realization that blew up or failed our spectral‐radius criterion was discarded and re-sampled until exactly 30 stable replicates were obtained. Appendix \ref{app:data_generation_parameters} lists, for each $V$, the maximum number of links $E$ (and corresponding density $d$) for which all 30 replicates passed stability. For example, with $V=1$ we were able to generate fully dense stencils ($d=1$), whereas for $V=6$ the upper limit was approximately $d=0.19$. In total, 31,140 independent simulations were generated.

For the synthetic \ac{VAR} benchmark only, we impose toroidal (wrap-around) boundary conditions so that every cell has the same local neighborhood structure. The model is a spatially-coupled VAR(1). Using the data tensor
$\mathbf{X}\in\mathbb{R}^{N\times N\times V\times T}$, the update rule for variable $v$ at cell $(i,j)$ and time $t$ is
\begin{equation}
  X_{i,j,v,t}
  \;=\;
  \sum_{u=1}^{V}
  \sum_{(i',j')\,\in\,\mathcal{N}(i,j)}
    \beta_{vu}^{(i-i',\,j-j')}\,
    X_{i',j',u,\,t-1}
  \;+\;
  \varepsilon_{i,j,v,t},
  \label{eq:var}
\end{equation}
where
\begin{itemize}[leftmargin=*, itemsep=2pt]
  \item $\mathcal{N}(i,j)$ is the Moore neighborhood of cell $(i,j)$, comprising the cell itself and its 8 immediate neighbors (9 cells total), with toroidal (wrap-around) indexing so the grid has no boundary.
  \item $\beta_{vu}^{(i-i',\,j-j')}\in\mathbb{R}$ is the stencil coefficient governing the causal effect of variable $u$ at neighbor cell $(i',j')$ on variable $v$ at the center cell $(i,j)$. The proportion of non-zero coefficients is controlled by the link density $d = E/(9V^{2})$, where $E$ is the number of active causal links in the stencil.
  \item $\varepsilon_{i,j,v,t}\overset{\mathrm{i.i.d.}}{\sim}\mathcal{N}(0,\sigma^{2})$ is additive Gaussian noise with $\sigma=0.1$.
\end{itemize}
Stacking all $VN^{2}$ state values into a vector $\mathbf{x}^{(t)}\in\mathbb{R}^{VN^{2}}$,
equation~\eqref{eq:var} is equivalent to the global VAR(1)
\begin{equation}
  \mathbf{x}^{(t)} = \mathbf{A}\,\mathbf{x}^{(t-1)} + \boldsymbol{\varepsilon}^{(t)},
  \qquad \boldsymbol{\varepsilon}^{(t)}\sim\mathcal{N}(\mathbf{0},\sigma^{2}\mathbf{I}),
  \label{eq:global_var}
\end{equation}
where $\mathbf{A}\in\mathbb{R}^{VN^{2}\times VN^{2}}$ is the global dynamics matrix obtained by tiling the stencil coefficients across the toroidal grid. Stability requires the spectral radius $\rho(\mathbf{A})<1$, enforced by the accept--reject scheme described above.

\subsubsection{Benchmark Results}
\label{sec:var_results}
We present empirical results of \mcastle's performance on our \ac{VAR} benchmarks varying: the number of variables, the number of graph dependencies (graph links), and the magnitude of coefficients. \citet{Nichol.2025.10.1029/2024jh000546} found that \castle performed better as grid sizes increased from $4\times4$ to $10\times10$ (traditional methods performed worse). Our tests demonstrated the same for \mcastle and we chose to focus strictly on $4\times4$ grids for these results. This simplifies the analysis and represents the most challenging setting for \mcastle while being the easiest for traditional methods.

We implemented \mcastle's \ac{PIP} with three common time series causal discovery algorithms: time-series-adapted \ac{PC}~\citep{Spirtes.1991.10.1177/089443939100900106,tigramite}, \ac{PCMCI}~\citep{Runge.2019.10.1126/sciadv.aau4996}, and \ac{DYNOTEARS}~\citep{Pamfil.2020}. We compare these \mcastle{}d variants against two alternatives: direct application of the same causal discovery algorithms without a stencil embedding, and Cartesian-\castle, an ablation baseline that combines independently estimated univariate \castle stencils with non-spatial inter-variable causal discovery on spatially aggregated data. The empirical results that follow demonstrate that \mcastle successfully generalizes \castle to multivariate gridded systems and that its benefits persist across multiple \ac{PIP} implementations.

\paragraph{Multivariate Performance}
\Cref{fig:precision_recall_f1_castle_vs_cart_with_baselines} shows mean F$_1$ score, precision, and recall as the number of variables increases from one to six. The figure includes all generated graph link densities and coefficient magnitudes, neither of which had noticeable impacts on performance alone. Across all three causal discovery algorithm \acp{PIP}, \mcastle performs best overall. It maintains high precision while achieving substantially higher recall than both Cartesian-\castle and the direct non-spatial baselines, yielding the strongest F$_1$ scores throughout the full range of variable counts. Cartesian-\castle performs intermediately: it improves over direct baseline application by retaining within-variable spatial structure, but remains consistently below \mcastle, especially as the number of interacting variables increases. This indicates that exploiting locality alone is not sufficient; jointly estimating within-variable and cross-variable relationships in a shared multivariate stencil is critical for accurate recovery. The same qualitative ordering holds across \ac{PC}, \ac{PCMCI}, and \ac{DYNOTEARS}, indicating that the gains from \mcastle are not specific to any single \ac{PIP} implementation.

\begin{figure*}[ht!]
    \centering
    \includegraphics[width=\textwidth]{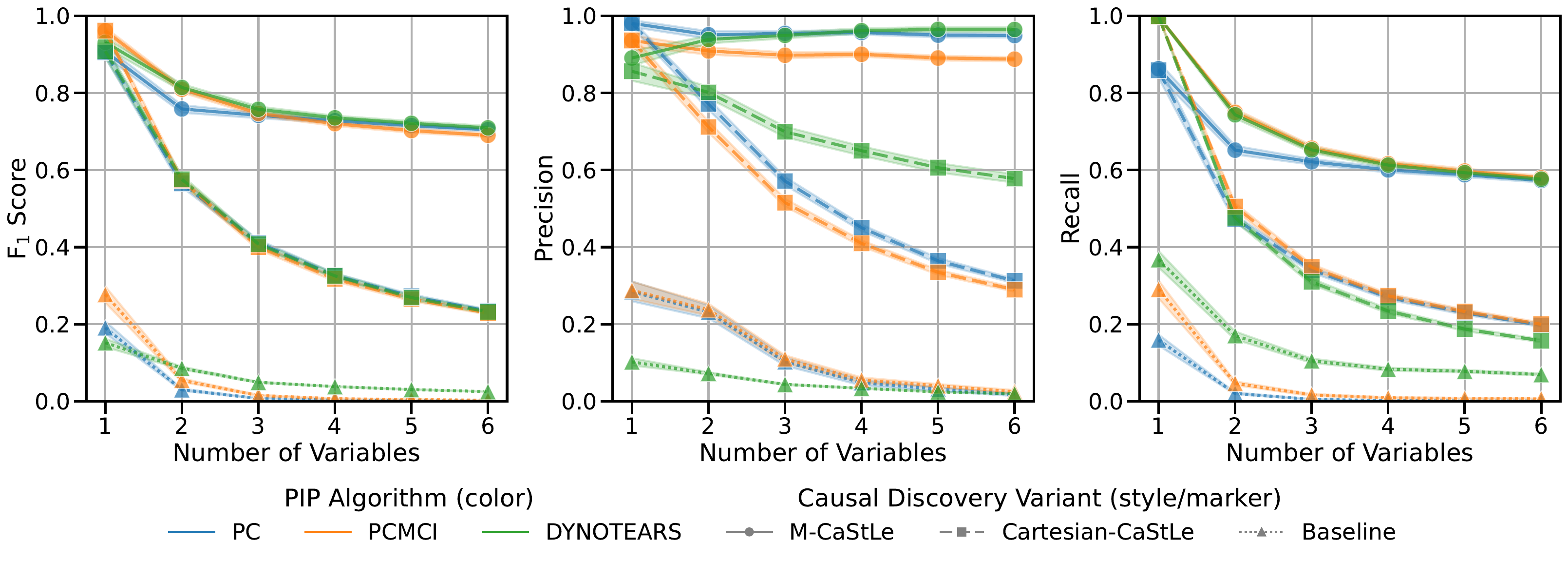}
    \caption{Results of the \ac{VAR} benchmark analysis comparing \mcastle, Cartesian-\castle, and direct non-stencil baselines as the number of variables increases from one to six. The three panels show mean F$_1$ score (left), precision (center), and recall (right) for \ac{PC}, \ac{PCMCI}, and \ac{DYNOTEARS} with 95\% confidence intervals. Across all three \acp{PIP}, \mcastle achieves the strongest overall performance, maintaining high precision and substantially higher recall than the alternatives. Cartesian-\castle improves over the direct baselines, but remains consistently below \mcastle, indicating that jointly estimating the multivariate local stencil structure provides additional benefit over independently estimating univariate spatial stencils and separately inferring non-spatial inter-variable links.}    \label{fig:precision_recall_f1_castle_vs_cart_with_baselines}
\end{figure*}

\paragraph{Impact of Graph Density on Recall}  
% To better understand \mcastle's declining performance as the number of variables increase, we investigated its precision and recall. The investigation is detailed fully in Appendix \ref{app:exploring_recall}. We found that precision is consistently $> 0.90$ for each \ac{PIP} but recall declines at a similar rate as F$_1$. This means that when \mcastle detects a graph link, it is very likely to be a true positive; but \mcastle is not detecting as many links as it should as $V$ increases. We conducted experiments on benchmark systems designed to isolate the effects of signal strength and variable count. The results, detailed in Appendix \ref{app:exploring_recall}, demonstrate that recall increases sharply with coefficient magnitude, achieving perfect recall when signals are sufficiently strong, even with up to 200 variables. These findings indicate that \mcastle can achieve high recall in large systems, provided the signal strength is adequately large, which requires lower relative graph density. Thus, declining performance with more variables is more likely explained by the limited signal to noise ratio of these increasingly complex systems due to our stability constraints.

Performance steadily declines as system complexity increases, from very high (F$_1 > 0.90$) to moderate scores for six variable systems. To understand why, we measured its precision and recall separately. As shown in \Cref{fig:precision_recall_f1_castle_vs_cart_with_baselines}, precision remains high across all three \ac{PIP} implementations, but recall, and thus F$_1$, declines with $V$. This means \mcastle finds few false positives, but misses an increasing number of true links in larger systems. In controlled experiments, isolating signal strength and variable count (Appendix~\ref{app:additional_var_results}), recall rises sharply with coefficient magnitude, reaching $100\%$ even at very large $V$ when signals are strong enough. This shows that, given sufficient signal strength (i.e.\ lower relative graph density), \mcastle can maintain high recall in large systems. Thus, its declining performance is driven primarily by the reduced signal-to-noise ratio imposed by our stability constraints.

This reflects a fundamental limit on causal detectability: enforcing $\rho(A) < 1$ on a dense transition matrix requires interaction strengths of $\mathcal{O}(1/\sqrt{n})$~\citep{MAY.1972.10.1038/238413a0,Geman.1986.10.1214/aop/1176992372}, where $n$ is the dimension of $A$, which vanish faster than the noise floor of any finite-sample estimator as model complexity grows, rendering true causal coefficients asymptotically indistinguishable from zero~\citep{Tsiamis.2021.10.48550/arxiv.2104.01120,Tosh.2025.10.1090/noti3044}.

\subsection{Advection-Diffusion-Reaction Dynamics}
\label{sec:adr_dynamics}
While the VAR benchmark is useful for understanding how \mcastle performs in controllable, idealized settings and for fairly comparing methodologies, it cannot capture more realistic physical dynamics. These dynamics are continuous and often include complex local relationships involving transport and diffusion. To examine \mcastle's performance in more realistic settings, we implemented an \ac{ADR} \ac{PDE} model that describes the transport and interaction of two chemical species.

\subsubsection{Model Definition}
The two species interacting in the model are denoted as \( u_1 \) and \( u_2 \). Their dynamics are defined by:
\begin{align}
\frac{\partial u_1}{\partial t} + \mathbf{v} \cdot \nabla u_1 &= D_1 \nabla^2 u_1 + R_1(u_1, u_2) \\
\frac{\partial u_2}{\partial t} + \mathbf{v} \cdot \nabla u_2 &= D_2 \nabla^2 u_2 + R_2(u_1, u_2)
\end{align}
where \( \mathbf{v} = (v_x, v_y) = (v \cos(\theta), v \sin(\theta)) \).

The reaction terms \( R_1(u_1, u_2) \) and \( R_2(u_1, u_2) \) govern the interaction between the two chemical species. Specifically, species \( u_1 \) undergoes a linear decay, converting into species \( u_2 \) at a rate controlled by two parameters: the decay rate (\( \alpha \)) and a conversion factor (\( \beta \)) that determines the proportion of \( u_1 \) converted into \( u_2 \). The reaction terms are defined as:
\[
R_1(u_1, u_2) = -\alpha u_1, \quad R_2(u_1, u_2) = \beta \alpha u_1
\]

For additional details on the model parameters, boundary conditions, and experimental setup, see Appendix \ref{app:ADR_details}.

\subsubsection{Ground Truth, Estimation, and Metrics}
As \citet{Rubenstein.2018.10.48550/arxiv.1608.08028} demonstrate, there is not a direct one-to-one mapping between causal graphs and differential equation systems, because the same underlying dynamics can be represented by multiple valid causal structures depending on variable choice, time scales, and levels of abstraction. That fundamental ambiguity poses a significant challenge for causal discovery algorithms applied to \ac{PDE}-governed systems, since there is no explicitly defined ground truth causal graph inherent to the differential equation specification.

Thus, to assess the quality of causal stencil graphs estimated by \mcastle, we derive physically meaningful reference structure from properties of the \ac{PDE} definition and a means of comparing the stencils to the reference. The two primary dynamical elements captured by the causal stencil graph are spatial dependence and interactions between species. Spatial dynamics can be modeled in the \ac{ADR} model by defining advection angle and velocity in \( \mathbf{v} \), and species interactions can be modeled in the reaction terms, \( R_1 \), and \( R_2 \).

\begin{figure*}[ht!]
    \centering
    \includegraphics[width=\textwidth]{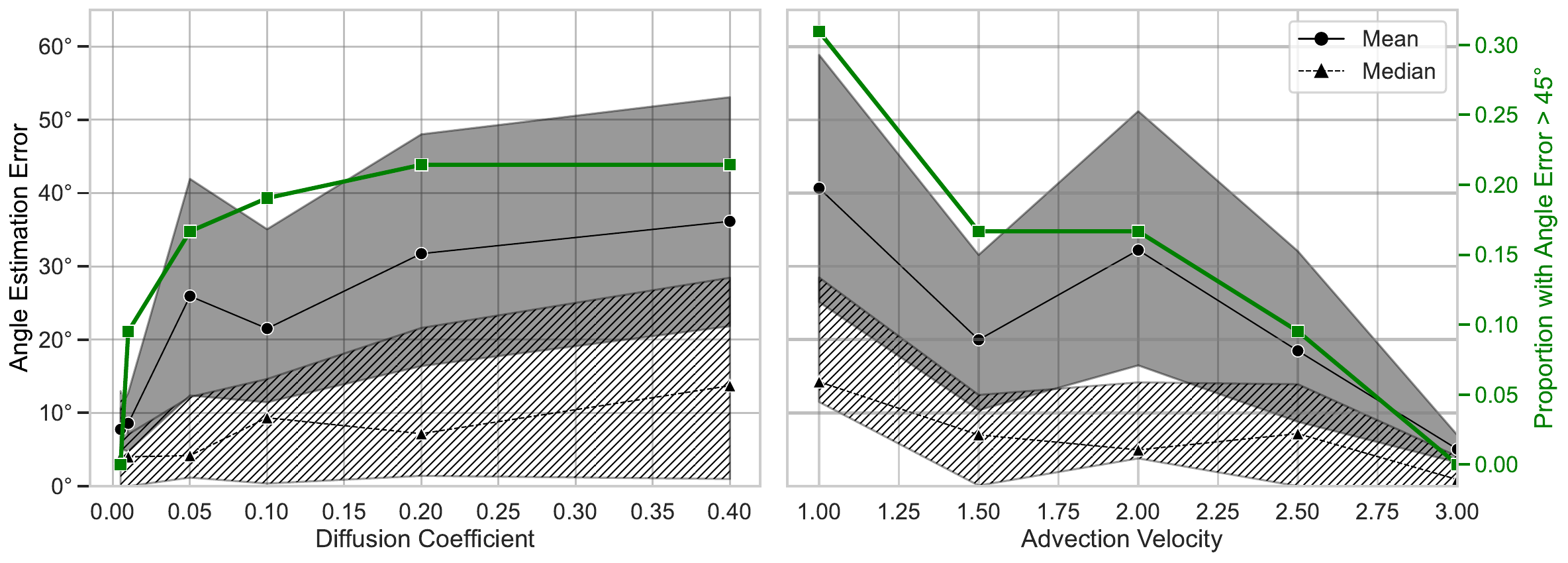}
    % \caption{Angle estimation error as a function of (left) diffusion coefficient and (right) advection velocity. Black circles connected by solid lines show mean angle estimation errors with a $95\%$ confidence interval, while black triangles connected by dotted lines show median angle estimation errors. Green squares show the proportion of cases where errors exceed \( 45^\circ \). For diffusion coefficients, mean errors remain relatively stable around \( 20^\circ \) before increasing at higher values, while median errors stay consistently lower around \( 5-15^\circ \). The proportion of large errors increases steadily from near \( 0\% \) to approximately \( 20\% \). For advection velocities, mean errors decrease sharply from around \( 40^\circ \) at \( v = 1.0 \) to near \( 0^\circ \) at \( v = 3.0 \), before rising again at \( v = 4.0 \), while median errors remain consistently low across all velocities. The proportion of large errors similarly decreases from approximately \( 30\% \) at low velocities to near \( 0\% \) at intermediate velocities, with a slight increase at the highest velocity. These results indicate optimal angle estimation performance occurs at low diffusion coefficients and intermediate advection velocities.}
    \caption{Angle estimation performance of \mcastle on the advection–diffusion–reaction \ac{PDE} verification experiments. \textbf{Left:} Mean (solid black line with circles) and median (dashed black line with triangles) of the absolute angular error $\Delta\theta$ as a function of diffusion coefficient, each with 95\% bootstrap confidence bands; green squares show the proportion of trials with $\Delta\theta>45^\circ$. \textbf{Right:} Same statistics plotted versus advection velocity. Results indicate robust transport-direction recovery at low diffusion and intermediate velocities, with increased error and outlier rates at high diffusion and extreme velocity.}
    \label{fig:angle_error}
\end{figure*}

To compare the discovered spatial dynamics to the \ac{ADR} advection angle, we derived an angle from the computed multivariate stencil graph. The approach involves computing a weighted circular mean of the angles represented by each edge, with weights determined by the correlation coefficients associated with each edge. We described the process mathematically in Appendix \ref{app:ADR_details}. To evaluate species interaction properties from the causal stencil graph output by \mcastle, we follow the decomposition procedure outlined in \Cref{subsubsec:decomposition} to obtain spatial and reaction graphs. The reaction graph can be directly compared to a causal graph representing the species interaction dynamics. For a detailed explanation of the estimation procedure, see Appendix \ref{app:ADR_details}.

To compute the angle estimation error, we find the minimum angle difference between the true and estimated angles~\citep[Appendix D]{Nichol.2025.10.1029/2024jh000546}. It is described as:
\[
\Delta \theta = \min\left( \left| \theta_1 - \theta_2 \right|, 360 - \left| \theta_1 - \theta_2 \right| \right)
\]
To measure the error of the reaction dynamics, we used the \( F_1 \) Score, defined in Appendix \ref{app:ADR_details}.

\paragraph{Experimental Parameters}
To evaluate the performance of \mcastle on realistic physical dynamics, we conducted a series of \ac{ADR} model experiments. Diffusion coefficients for the two species (\( D_1 \) and \( D_2 \)) varied together between \( 0.005 \) and \( 0.4 \), advection velocities (\( v \)) ranged from \( 1.0 \) to \( 3.0 \), with angles every $15^\circ$ from \( 0^\circ \) to \( 90^\circ \). Reaction rates (\( \alpha \)) were set to \( 1.0 \) with a scaling factor (\( \beta \)) of \( 1.0 \), which capture the decay of \( u_1 \) into \( u_2 \). For additional experimental details, see Appendix \ref{app:ADR_details}.

\subsubsection{ADR Results}
The reaction graph performance for the \ac{ADR} model experiments was highly effective, with a median \( F_1 \) score of \( 1.0 \), indicating perfect graph estimation in most cases. A small subset of experiments had \( F_1 \) scores below \( 0.8 \), and two experiments resulted in \( F_1 = 0.0 \) due to no links being estimated. These results highlight \mcastle's effectiveness at recovering reaction dynamics in recoverable regimes, while cases with lower \( F_1 \) scores underscore the inherent challenges of causal inference in regimes where reaction signals are weak or absent. For a detailed frequency distribution of \( F_1 \) scores, see Appendix \ref{app:adr_f1_scores}. The appendix includes a histogram summarizing the distribution, which shows a mean \( F_1 \) score of \( 0.912 \) and highlights the rarity of cases with \( F_1 = 0 \).

\Cref{fig:angle_error} presents angle estimation performance using \mcastle's estimated causal stencil graphs as a function of diffusion coefficient (left) and advection velocity (right). The median error across all experiments was \( 4.76^\circ \), demonstrating \mcastle's generally strong performance. For diffusion coefficients, mean errors remain relatively stable around \( 20^\circ \) before increasing at higher values, while median errors stay consistently lower around \( 5-15^\circ \). The proportion of large errors (\( > 45^\circ \)) increases steadily from near \( 0\% \) to approximately \( 20\% \) as diffusion increases, indicating that higher diffusion coefficients are associated with greater ambiguity in advection dynamics.

\mcastle demonstrates robust performance across a range of diffusion and advection regimes. While the most difficult settings increase the rate of large errors, results show that error is generally low in all cases. The observed increase in errors in certain regimes (e.g., high diffusion or extreme advection velocities) is consistent with the expected behavior of the system, where dynamics become less recoverable due to the fundamental nature of the underlying processes. The interplay between diffusion, advection, and reaction significantly impacts causal inference accuracy. Strong advection improves directional clarity, while high diffusion introduces ambiguity that makes advection dynamics less distinguishable or absent. Reaction dynamics are recoverable in most cases, with rare exceptions where signals are weak or absent.

\section{Real-World Applications}
\label{sec:case_studies}
We next evaluate \mcastle in two real-world settings: atmospheric chemistry in a fully coupled Earth system model and observed \ac{ENSO} dynamics in observation data. The atmospheric chemistry study uses data from \ac{E3SMv2-SPA} to demonstrate reconstructing chemical pathways in a real-world application setting. Our study of \ac{ENSO} shows that \mcastle recovers physically consistent, phase-dependent ocean–atmosphere coupling, demonstrating its utility for interpreting causal structure in large-scale Earth system phenomena.

\subsection{Atmospheric Chemistry}
\label{sec:atmospheric_chemistry}
To explore realistic physical dynamics with well-understood underlying mechanisms, we examine atmospheric aerosol chemistry following the 1991 Mount Pinatubo eruption. This volcanic event injected approximately 20 Tg of \ce{SO2} into the atmosphere, forming stratospheric sulfate aerosols that persisted, advecting and diffusing globally, for approximately two years~\citep{parker1996,Guo.2004.10.1029/2003gc000654}. The resulting climate perturbation provides a natural analogue for stratospheric aerosol injection climate strategies.

We analyze a Mount Pinatubo eruption simulation produced by the fully coupled \ac{E3SMv2-SPA} model, which includes prognostic sulfate chemistry for key species such as \ce{SO2}, \ce{H2SO4}, and \ce{SO4}. The intrinsic representation of aerosols within this modeling framework introduces additional complexity to the chemical pathway interpretation and analysis. For additional background on the Pinatubo eruption, its climate impacts, and a description of the model components and experimental setup, refer to Appendix \ref{sec:appendix_atmospheric_chemistry}.

\subsubsection{Experimental Setup}
We examine the chemical pathway of tagged Mt. Pinatubo-born \ce{SO2} aerosols. These aerosols follow the pathway $\ce{SO2} \rightarrow \ce{H2SO4} \rightarrow \ce{SO4}$ and $\{\ce{SO2}, \ce{H2SO4}, \ce{SO4}\} \rightarrow \text{FSDS}$, which mediates incoming solar radiative flux and impacts atmospheric temperature~\citep{Ramachandran.2000.10.1029/2000jd900355,Hu.2024.10.5194/egusphere-2024-2227}. For detailed information on the dataset, spatial resolution, and temporal sampling, see Appendix \ref{sec:appendix_atmospheric_chemistry}.

Despite coarse temporal sampling, we demonstrate that much of the chemical pathway is recoverable using \mcastle's spatial replicates. The dataset includes $N \times N = 30 \times 30$ grid cells with $T=7$ time samples per species per grid cell, resulting in $(N-2)^2T=5,488$ samples per species per \ac{LENS} node. Because atmospheric aerosols mediate radiative flux at rates far exceeding the $6$\,h temporal resolution, we apply an \mcastle{} \emph{link assumption} that blacklists outgoing edges from FSDS, encoding the prior knowledge that FSDS cannot appear as a lag-1 parent of any chemical species. We evaluate \mcastle{} both with and without this assumption to isolate its effect on recovered structure.

\subsubsection{Results: Mt. Pinatubo Pathway Recovery, Compared to \ac{PCMCI}}
\label{subsubsec:chem_results}

To evaluate recovered structure against prior knowledge, we constructed a reference graph encoding the causal pathways expected from atmospheric chemistry at $6$\,h resolution (Figure~\ref{fig:four_panel_causal_comparison}(a)). Solid arrows denote the primary chain $\ce{SO2} \rightarrow \ce{H2SO4} \rightarrow \ce{SO4} \rightarrow \text{FSDS}$ and lag-1 autocorrelations, each expected to manifest within a single timestep. Dashed arrows denote two mediated pathways, $\ce{SO2} \rightarrow \ce{SO4}$ and $\ce{H2SO4} \rightarrow \text{FSDS}$, which are physically plausible but whose appearance within one step depends on how completely the intermediate step has proceeded. $F_1$ is computed treating both solid and dashed edges as true positives.

\mcastle{} recovers the reference structure nearly perfectly, with
$F_1 = 0.95$ under the FSDS link assumption (Figure~\ref{fig:four_panel_causal_comparison}(b)) and $F_1 = 0.86$ without (Figure~\ref{fig:four_panel_causal_comparison}(c)). The only discrepancy in the constrained run is a weak $\ce{SO2} \rightarrow \text{FSDS}$ edge. This edge may reflect a fraction of \ce{SO2} having completed the full chain within a single timestep, or it may be a false positive. We treat it conservatively as the latter, so that the reference remains an a priori mechanistic expectation rather than a tuned target. Without the FSDS assumption, \mcastle{} additionally recovers $\text{FSDS} \rightarrow \ce{SO4}$ and $\ce{SO4} \rightarrow \ce{H2SO4}$. The first is the physically inadmissible edge the assumption is designed to exclude. The second reverses a step of the oxidation chain and does not involve FSDS directly, illustrating that unconstraining one node's outgoing edges propagates through the conditioning sets used during causal discovery and degrades structure recovery beyond the blacklisted node itself.

For comparison, we applied \ac{PCMCI} directly to the area-weighted
spatial mean of each variable (Figure~\ref{fig:four_panel_causal_comparison}(d)). The result is an empty graph. Spatial aggregation collapses $5{,}488$ samples per species into $7$, leaving too few effective degrees of freedom for \ac{PCMCI} to reject independence. \mcastle{} avoids this collapse by treating each grid cell's neighborhood as a replicate of the same spatial stencil. For short-duration, spatially-structured phenomena like volcanic plume evolution, exploiting spatial structure is a prerequisite for causal recovery.

\begin{figure}[ht]
    \centering
    \includegraphics[width=\textwidth]{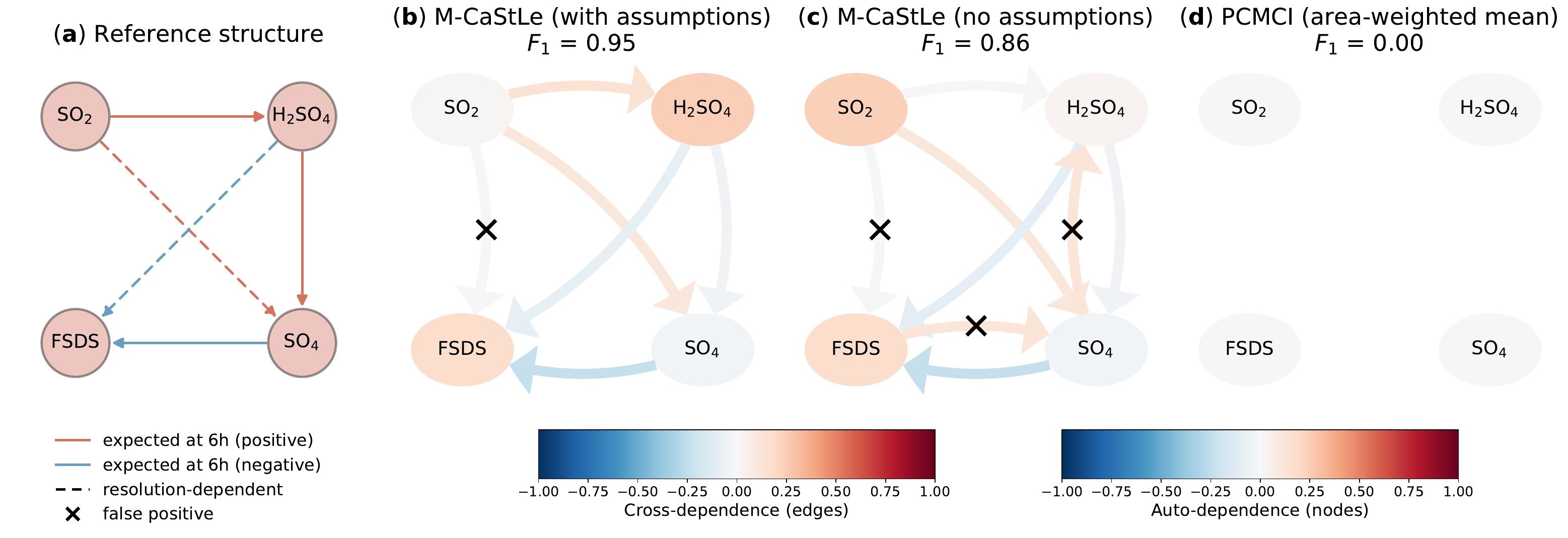}
    \caption{Causal graphs recovered from six-hourly \ac{E3SMv2-SPA} output over the Pinatubo plume region ($0^\circ$--$30^\circ$\,N, $60^\circ$--$90^\circ$\,E) for four Pinatubo-attributable variables: \ce{SO2} burden, \ce{H2SO4} burden, \ce{SO4} \ac{AOD}, and downwelling surface shortwave flux (FSDS). \textbf{(a)} Reference structure encoding expected causal pathways from atmospheric chemistry at $6$\,h resolution: solid arrows denote cross-dependencies expected within a single timestep; dashed arrows denote resolution-dependent relationships that may or may not resolve within one step; node shading indicates expected positive auto-dependence (persistence). All count as true positives for $F_1$. \textbf{(b)} \mcastle with link assumptions blacklisting outgoing edges from FSDS. \textbf{(c)} \mcastle without link assumptions. \textbf{(d)} \ac{PCMCI} applied to the area-weighted spatial mean of each variable. In panels (b)--(d), edge color encodes cross-dependence strength and node color encodes auto-dependence strength, with positive values in reds and negative in blues; black crosses mark false positives. \mcastle correctly identifies the general pathway $\ce{SO2} \rightarrow \ce{H2SO4} \rightarrow \ce{SO4}$ and $\{\ce{SO2}, \ce{H2SO4}, \ce{SO4}\} \rightarrow \text{FSDS}$, with a single weak false positive ($\ce{SO2} \rightarrow \text{FSDS}$) in the constrained run; the unconstrained run recovers additional implausible edges, while \ac{PCMCI} on the aggregated timeseries recovers no structure at all.}
    \label{fig:four_panel_causal_comparison}
\end{figure}

\subsection{\acl{ENSO} Dynamics}
\label{sec:enso}
The \acf{ENSO} is a quasi-periodic Earth system phenomenon arising from coupled ocean–atmosphere interactions in the tropical Pacific, with far-reaching impacts on global weather and climate. \ac{ENSO}’s positive phase (El Ni\~{n}o) is characterized by warm \ac{SST} anomalies over the central and eastern tropical Pacific, which enhance convection and rainfall over the central Pacific and favor anomalously dry conditions over the western Pacific. In contrast, \ac{ENSO}’s negative phase (La Ni\~{n}a) features cold \ac{SST} anomalies over the central and eastern tropical Pacific, suppressing convection there while the Indo-Pacific warm pool becomes anomalously warm~\citep{philander_nino_1983}. Although the full set of processes governing \ac{ENSO} evolution remains an active area of research, the causal influence of \ac{SST} anomalies on tropical Pacific convection is well established and therefore provides a useful test case for evaluating \mcastle.

We applied \mcastle-\ac{DYNOTEARS} to two contrasting periods corresponding to an El Ni\~{n}o event and a La Ni\~{n}a event, using daily anomalies of \ac{SST} and \ac{OLR} (used here as a continuous proxy for rainfall) from ERA5 reanalysis data~\citep{Hersbach.2020.10.1002/qj.3803} at $1^\circ$ spatial resolution. See Appendix \ref{app:enso_alg_details} for \mcastle implementation details. The resulting causal maps are shown in \Cref{fig:enso_dynotears}. In both events, \mcastle identifies \ac{SST} as exerting causal influence on both \ac{SST} (persistence) and \ac{OLR}, while \ac{OLR} primarily drives itself, with no robust evidence for \ac{OLR} driving \ac{SST}. This asymmetry is physically consistent with the known ocean–atmosphere coupling in the tropical Pacific, where \ac{SST} anomalies modulate convection rather than the reverse~\citep{trenberth_relationships_2005}.

\begin{figure}[ht]
    \centering
    \includegraphics[width=\columnwidth]{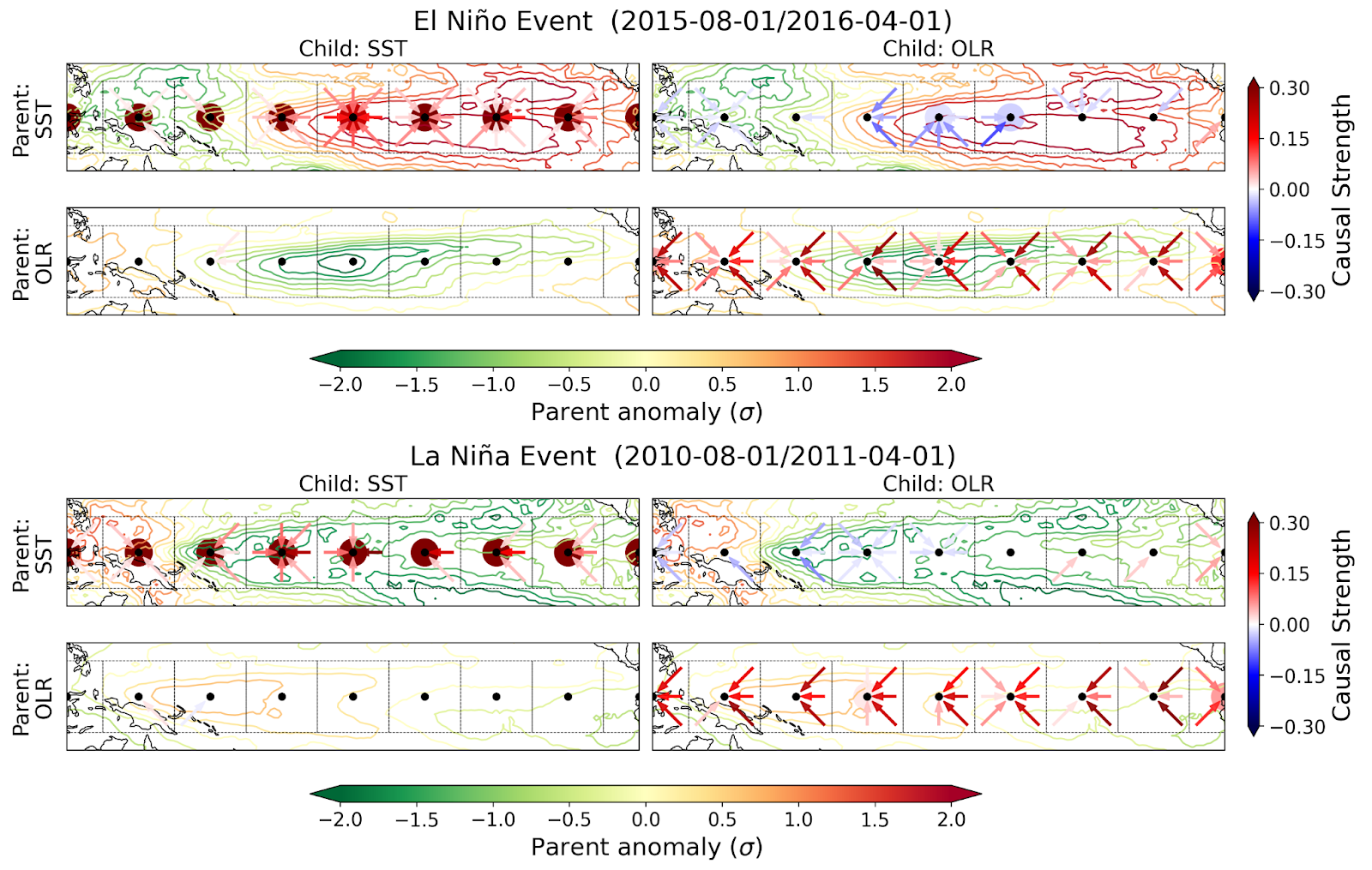}
    \caption{\mcastle results obtained using \ac{DYNOTEARS} illustrating causal relationships between \acf{SST} and \acf{OLR} over the tropical Pacific for two contrasting periods corresponding to an El Ni\~{n}o and a La Ni\~{n}a event. Background contours show mean anomalies over each period relative to the 1995–2024 climatology (green contours indicate negative anomalies; red contours indicate positive anomalies). \mcastle was applied over $20^\circ \times 20^\circ$ regions centered along the equator (dashed boxes), and the inferred local causal stencils are overlaid for each region. Arrow and center-dot colors denote the magnitude and sign of the inferred causal strength from \ac{DYNOTEARS}.}
    \label{fig:enso_dynotears}
    % \Description[<short description>]{<long description>}
\end{figure}

Moreover, the magnitude and spatial structure of the inferred causal links reflect known \ac{ENSO}-phase-dependent behavior. During El Ni\~{n}o, positive \ac{SST} anomalies over the central Pacific are associated with negative \ac{OLR} anomalies (enhanced convection), whereas during La Ni\~{n}a the \ac{SST}–\ac{OLR} coupling is markedly weaker and consistent with suppressed convection over the central and eastern Pacific. In addition to these cross-variable links, \mcastle captures differences in univariate dynamics between ENSO phases. During the La Ni\~{n}a event, both \ac{OLR} and \ac{SST} anomalies exhibit predominantly westward propagation across the tropical band, while during the El Ni\~{n}o event, east–west propagation of \ac{SST} anomalies is largely confined to the eastern tropical Pacific and the zonal propagation of \ac{OLR} anomalies is comparatively weaker.

The correspondence between the inferred causal structures and well-established \ac{ENSO} dynamics, together with the clear phase-dependent differences, demonstrates that \mcastle can be meaningfully applied to observed data for specific events. These results highlight the potential of \mcastle to help identify and interpret causal relationships in the Earth system that give rise to particular—and potentially extreme—climate conditions.

\section{Discussion}
\label{sec:discussion}
We introduced \mcastle, a multivariate generalization of the space-time causal discovery meta-algorithm \castle~\citep{Nichol.2025.10.1029/2024jh000546}. \mcastle generalizes the \acf{LENS} and \acf{PIP} components of \castle to jointly recover within-variable, cross-variable, and spatial causal relationships in gridded space-time data, producing a multivariate causal stencil graph that captures both transport and inter-variable dynamics. When each grid cell contains a single variable, this formulation reduces to the original univariate \castle framework. To aid interpretation, we proposed a decomposition method that separates spatial and inter-variable relationships into a \emph{spatial graph} and \emph{reaction graph}.

\subsection{Key Findings}
Our multivariate \ac{VAR} benchmarks in \Cref{sec:VAR} demonstrated \mcastle's high precision and recall, outperforming the \ac{PC}, \ac{PCMCI}, and \ac{DYNOTEARS} causal discovery algorithms in multivariate gridded systems. \mcastle retains high precision ($>0.9$ across all $V$) and F$_1$ up to $\approx 0.96$ for $V = 1$, degrading moderately as the number of variables increases (\Cref{fig:precision_recall_f1_castle_vs_cart_with_baselines}). Direct non-spatial baselines remain near chance throughout most of the tested regime, while Cartesian-\castle improves over them but remains consistently below \mcastle. The recall decline with $V$ is consistent with, and expected under, the stability constraint forcing coefficients toward $0$ (the piranha constraint; \citealp{Tosh.2025.10.1090/noti3044}), rather than false positives.

To evaluate general applicability, we tested \mcastle on realistic physical systems, including the \acf{ADR} \ac{PDE} model, in \Cref{sec:adr_dynamics}. \mcastle accurately recovered spatial and inter-species dynamics across diverse advection-diffusion environments, with median spatial angle error below \(5^\circ\) and perfect reaction-graph recovery in more than 90\% of cases (\Cref{fig:angle_error}). Error increases at high diffusion where transport direction is physically less identifiable.

In the atmospheric chemistry case, in \Cref{sec:atmospheric_chemistry}, \mcastle successfully identified the causal pathway from \ce{SO2} to solar radiative flux following the Mt. Pinatubo eruption, aligning with known atmospheric impacts. Despite only seven temporal snapshots, \mcastle reconstructs the expected \(\ce{SO2}\!\!\rightarrow\!\!\ce{H2SO4}\!\!\rightarrow\!\!\ce{SO4}\) sequence and associated radiative impacts (\Cref{fig:four_panel_causal_comparison}), demonstrating the power of spatial replication. Pooling $(N-2)^2$ neighborhoods yields $5,488$ window samples (up to a design-effect reduction under spatial dependence).

Finally, we demonstrated \mcastle's ability to recover known event-scale, regional coupling patterns within \ac{ENSO} phases in \Cref{sec:enso}. Phase-dependent \ac{SST} $\rightarrow$ \ac{OLR} causal links match canonical El Ni\~{n}o vs. La Ni\~{n}a coupling patterns (\Cref{fig:enso_dynotears}), while \ac{OLR} $\rightarrow$ \ac{SST} paths remain negligible. These results suggest \mcastle can recover physically consistent, event-scale local coupling patterns in observed Earth system data.

\subsection{Limitations \& Practical Recommendations}
\mcastle inherits \castle's reliance on locality and (approximate) stationarity (Assumptions T2 \& S2), so it is not appropriate for every space-time problem.

\begin{itemize}[leftmargin=*, itemsep=2pt]
    \item \textbf{Nonlocal structure / teleconnections:} \mcastle targets local mechanisms and is not intended to infer global teleconnection networks. While \ac{ENSO} is a teleconnection phenomenon, our analysis focuses on local, phase-dependent \ac{SST}--\ac{OLR} coupling within $20^\circ \times 20^\circ$ windows (\Cref{sec:enso}). For teleconnections, dimensionality-reduction approaches are often more suitable~\citep{Runge.2019.10.1038/s41467-019-10105-3,Tibau.2022.10.1017/eds.2022.11}.
    \item \textbf{Regime changes:} apply \mcastle only within time intervals and regions where one dominant causal structure is plausible; otherwise estimated stencils may mix regimes.
    \item \textbf{Resolution limits:} coarse temporal (and/or spatial) sampling can obscure fast processes (e.g., Mt.\ Pinatubo radiative feedbacks; \Cref{subsubsec:chem_results}). However, \citet[Appendix I]{Nichol.2025.10.1029/2024jh000546} suggests \castle remains informative under substantial spatial coarsening, so precise space-time alignment is not always required.
    \item \textbf{Correlated spatial replicates:} if nearby grid cells are highly correlated, pooling yields fewer effective samples and dynamics may be harder to identify (see \Cref{sec:math_foundations}).
    \item \textbf{Mitigations and extensions:} larger/adaptive neighborhoods may help at the cost of fewer replicates; the current implementation assumes rectangular grids but it may generalize to irregular meshes with adapted locality constraints.
    \item \textbf{Preprocessing and constraints:} climatological anomalies~\citep{Hansen.1999.10.1029/1999JD900835} (as in \Cref{sec:enso}), detrending, or lag-differences can reduce seasonal confounding and improve stationarity~\citep{Runge.2023.10.1038/s43017-023-00431-y}. Link assumptions (black-/white-lists) can encode known structures (e.g., FSDS as a sink in \Cref{sec:atmospheric_chemistry}).
    \item \textbf{Regularization and hyperparameters:} \mcastle inherits regularization choices from the chosen \ac{PIP} algorithm (e.g., \ac{PC}/\ac{PCMCI} significance levels and multiple-testing thresholds, or \ac{DYNOTEARS} sparsity penalties). These parameters control edge inclusion and are not tunable by predictive cross-validation, since the objective is causal structure rather than one-step-ahead forecast error. We therefore recommend checking robustness via sensitivity/stability analyses. Appendices \ref{app:data_generation}, \ref{app:ADR_details}, \ref{app:chem_alg_details}, and \ref{app:enso_alg_details} provide our per-experiment \ac{PIP} configurations.
\end{itemize}

\subsection{Future Work \& Conclusion}
As noted above, generalizing \mcastle's $3\times 3$ stencil to larger spatial neighborhoods, such as radius-2 Moore neighborhoods could improve causal discovery in systems with mismatched spatial and temporal resolutions~\citep{Nichol.2023.10.2172/1991387}. For irregular or unfixed meshes, adaptive neighborhood structures could be developed too. For more dynamic settings, in which spatial stationarity is less certain or quickly changing, stationary regions could be identified via stability checks with \mcastle: iteratively updating region sizes and locations to find consistent stencils.

This work showcases \mcastle's ability to discover multivariate grid-level space-time causal structures in four complementary settings: synthetic \ac{VAR} benchmarks, a physics-based \ac{ADR} verification problem, an atmospheric chemistry case study, and observed \ac{ENSO} dynamics. Together, our findings show that \mcastle’s local-stencil view preserves scientifically interpretable structure across a spectrum of dynamical regimes. By generalizing \castle from univariate to multivariate systems, \mcastle broadens local stencil learning from single-field analyses to joint causal discovery over multiple co-located variables while retaining the original method as a special case.

% \section*{Broader Impact Statement}

% In this optional section, TMLR encourages authors to discuss possible repercussions of their work,
% notably any potential negative impact that a user of this research should be aware of. 
% Authors should consult the TMLR Ethics Guidelines available on the TMLR website
% for guidance on how to approach this subject.

% \section*{Author Contributions}
% If you'd like to, you may include a section for author contributions as is done
% in many journals. This is optional and at the discretion of the authors. Only add
% this information once your submission is accepted and deanonymized. 

\section{Acknowledgments}
We thank Laura P. Swiler for her significant guidance and feedback throughout the development of this work. We thank Diana Bull, the Principal Investigator of the CLDERA (CLimate impact: Determining Etiology thRough pAthways) project at Sandia National Laboratories (SNL), along with the entire CLDERA team, for their support in making this work possible. We also thank Joey Hart and Shane McQuarrie at SNL for their contributions to developing the initial ADR model.

This work was supported by the Laboratory Directed Research and Development program at Sandia National Laboratories, a multi-mission laboratory managed and operated by National Technology \& Engineering Solutions of Sandia, LLC (NTESS), a wholly owned subsidiary of Honeywell International Inc., for the U.S. Department of Energy’s National Nuclear Security Administration (DOE/NNSA) under contract DE-NA0003525. This written work is authored by employees of NTESS. The employees, not NTESS, own the right, title, and interest in and to the written work and is responsible for its contents. Any subjective views or opinions that might be expressed in the written work do not necessarily represent the views of the U.S. Government. The publisher acknowledges that the U.S. Government retains a non-exclusive, paid-up, irrevocable, world-wide license to publish or reproduce the published form of this written work or allow others to do so, for U.S. Government purposes. The DOE will provide public access to results of federally sponsored research in accordance with the DOE Public Access Plan.

This paper describes objective technical results and analysis. Any subjective views or opinions that might be expressed in the paper do not necessarily represent the views of the U.S. Department of Energy or the United States Government.

\bibliography{fullbib}
\bibliographystyle{tmlr}

\appendix

\section{Univariate \castle Preliminaries}
\label{app:u_castle}
Here, we outline the foundational framework on which our method builds. In depth detail, analysis, and discussion of applications and limitations are given by \citet{Nichol.2025.10.1029/2024jh000546}.

\castle leverages local causal regularities to transform the causal discovery problem. It converts a high-dimensional space with many graph nodes and limited observations into an embedding with fewer nodes and more abundant observations. The present work generalizes \castle, broadening its applicability to multivariate space-time dynamics, expanding its utility across diverse space-time systems in the physical sciences.

% \begin{figure}[ht]
%     \centering
%     % \includegraphics[width=0.5\columnwidth]{LENS.pdf}
%     \includesvg[width=\columnwidth]{castle_LENS_3panel.svg}
%     \caption{Illustration of the \acf{LENS} transformation applied to a $4\times4$ grid. \textbf{Panel A}: The $3\times3$ Moore neighborhood is extracted around each of the four interior cells (colored circles), sliding across the grid in raster order (dashed arrows). The colored overlays show the four extracted neighborhoods, each centered on its respective interior cell. \textbf{Panel B}: For each of the nine relative neighborhood positions (NW through SE, indicated by the small icons), the time series from all four neighborhoods are concatenated along the time axis, multiplying the effective sample size by the number of interior cells. \textbf{Panel C}: The nine concatenated time series are arranged into their corresponding $3\times3$ spatial positions, forming the \ac{LENS}. The four colors in each cell denote the contributions from each of the four source neighborhoods. \Cref{fig:ucastle} illustrates how the \ac{LENS} is used in the full \castle process.}
%     \label{fig:LENS}
% \end{figure}

In many natural and engineered systems, complex global behaviors emerge from simple local interactions that follow consistent physical dynamics. \citet{Nichol.2025.10.1029/2024jh000546} called such systems \textit{\ac{PDE}-like} because they exhibit consistent dynamics defined by interactions between adjacent points in space, with smooth transitions between dynamical boundaries and equilibria. \castle makes two sets of assumptions that constrain its application domain: (1) spatial and temporal locality and (2) spatial and temporal stationarity. In short, (1) holds causal dynamics to be most direct and impactful between adjacent grid cells and adjacent points in time; near things impact near things first. (2) ensures that one causal structure is sought for the given spatial domain and temporal interval. These constraints encompass numerous well-studied systems including those governed by \acp{PDE}, cellular automata, and various lattice models in statistical physics. These assumptions are generalized to multiple variables and discussed more precisely in \Cref{sec:methods}.

\castle leverages locality and stationarity to collect time series representing the space-time causal replicates in such systems. Every grid cell's time series encodes the causal influence of its neighbors, and each can be used as an informative replicate of the system's local dynamics. \castle processes a set of grid cells, collecting each one's data on its local dependence, then learns the causal structure of the grid cells and their neighborhoods. In some cases, causal discovery can be applied independently to groups of local grid cells. However, many systems contain more grid cells than observations within each cell. \castle effectively estimates causal structures in this regime by efficiently using all available dynamical information in a region.

\subsection{Phase 1: Locally Encoded Neighborhood Structure (LENS)}

\castle's first phase forms the \ac{LENS}, an embedding representing the \emph{Moore neighborhood}, a 3$\times$3 matrix of a grid cell and its eight immediate neighbors. The \ac{LENS} contains concatenated time series from each grid cell's Moore neighborhood such that the local dynamics of each neighbor are repeated. For an $N\times N$ grid, each entry of the embedding contains $N_\text{rep}=(N-2)^2$ concatenated time series collected from each interior grid cell. Thus, each time series has length $TN_\text{rep}$ for $T$ time samples per grid cell. Unlike dimensionality reduction techniques, no information is marginalized.

% \begin{figure*}[ht]
%     \centering
%     \includegraphics[width=\textwidth]{uCaStLe_Ex.pdf}
%     \caption{The \castle computational procedure on an example $4$$\times$$4$ grid: the \ac{LENS} construction (Phase 1) and \ac{PIP} analysis (Phase 2) produce a causal stencil graph capturing local dynamics.}
%     \label{fig:ucastle}
%     % \Description[Complete CaStLe workflow from input data to causal stencil graph]{A comprehensive illustration of the CaStLe algorithm workflow applied to a 4×4 gridded space-time system. The diagram shows three main stages: (1) Input Data - a 4×4 grid where each cell contains time series data represented by different colored waveforms; (2) Phase 1: LENS Construction - demonstrating both "Neighborhood Collection" where Moore neighborhoods are systematically extracted from each interior grid cell, and the resulting "LENS" structure with concatenated time series organized by spatial position (NW, N, NE, W, C, E, SW, S, SE); and (3) Phase 2: PIP - showing the final "Causal Stencil Graph" with nodes representing grid positions and directed edges indicating discovered causal relationships, with a prominent connection to the center node.}
% \end{figure*}

\subsection{Phase 2: Parent Identification Phase (PIP)}

Once the \ac{LENS} embedding is constructed, \castle's second phase, the \ac{PIP}, applies an adapted causal discovery algorithm to the embedding. Any time series causal discovery algorithm can be adapted for \castle by treating the embedding's center grid cell as special. Specifically, the variable at the center position serves as the only child in the resulting causal graph, while parent sets remain unconstrained. This small adaptation has multiple significant effects: it creates a graph of the generalized ancestry for each grid cell, eliminates would-be unobserved confounding between the embedding's outer grid cells and their neighbors beyond the embedding, and increases computational and statistical efficiency.

Resulting from applying the \ac{PIP} on the \ac{LENS} is the \textit{causal stencil graph}, a representation of the local causal dynamics between all grid cells in the system. \castle provides a powerful foundation for both forward simulation and inverse problems: identifying the underlying causal structure from observed space-time data.

\section{Detailed Methodology for Multivariate Space-Time \acp{VAR}}
\label{app:data_generation}
To adapt the space-time \ac{VAR} procedure for multivariate systems, we grow the \ac{NDM} in a new variable dimension, which gets mapped to a larger, flat, $\bm{A}$ matrix. The multivariate \ac{NDM} describes interactions between multiple variables at the local grid-level, enabling \ac{VAR} modeling of multivariate space-time dynamics. In this appendix, we use $N\times M$ to describe the most general rectangular-grid case; this specializes to the $N\times N$ notation used in the main text.

For a multivariate system on an $N\times M$ grid with $V$ variables per cell and $T$ time samples, let $\bm{X} \in \mathbb{R}^{N \times M \times V \times T}$ with elements $X_{i,j,v,t}$. The corresponding global \ac{VAR}(1) model can be written after vectorization as
\begin{equation}\label{eq:VARs}
    \bm{x}_t = \bm{A} \bm{x}_{t-1} + \bm{\eta}_t,
\end{equation}
where $\bm{A}$ is the coefficient matrix encoding linear dependencies between all variables in the system and $\bm{\eta}$ represents independent \emph{innovations} on $\bm{X}$ for each variable at each time step. In this case, innovations are modeled with a unit normal distribution.

For a system of $V$ variables, the multivariate dynamics are represented by a set of $V$$\times$$V$ $3$$\times$$3$ matrices. Each $3$$\times$$3$ matrix corresponds to the space-time dependence structure of a particular pair of parent and child variables. Like the univariate \ac{NDM}, each entry in each $3$$\times$$3$ matrix is a coefficient value representing the influence of the entry's spatial location in the Moore neighborhood on the center location.

The \ac{NDM} is mapped to an $\bm{A}$ matrix, which represents the interactions of every grid cell-variable on every other grid cell-variable. For a grid of size $N$$\times$$M$ spatial dimensions and $V$ variables, the matrix $\bm{A} \in \mathbb{R}^{NMV \times NMV}$. With the computed $\bm{A}$ matrix, we again enforce stability by ensuring $\rho(\bm{A}) < 1.0$, where $\rho(\bm{A})$ is the spectral radius of $\bm{A}$.

With a stable $\bm{A}$ matrix, experimental data can be generated for any number of grid cells, time samples, local dependencies, and variables. Although $\bm{A}$ is larger, the \acp{VAR} still have the form of Equation \ref{eq:VARs}. Since most $\bm{A}$ matrices will be unstable, our implementation uses an accept-reject scheme similar to the univariate approach of \citet{Nichol.2025.10.1029/2024jh000546} to generate stable $\bm{A}$ matrices:
\begin{enumerate}
    \item Generate a random set of \(3 \times 3\) \emph{local dynamics matrices}, \(\{C_{ij}\}\), for each pair of child and parent variables, resulting in \(V \times V\) matrices. Each \(C_{ij}\) has $k$ non-zero elements, including the central element (autocorrelation), where \(1 \leq k \leq 9\). Each of the \(k\) non-zero elements, \(\{a_i\}_{i=1}^k\), is assigned a random value satisfying \(1.0 \ge a_i \ge s_*\).
    \item Expand \(\{C_{ij}\}\) to form the matrix \(\bm{A}\) for a grid of size \(N \times M\), resulting in \(\bm{A} \in \mathbb{R}^{NMV \times NMV}\).
    \item If $|\lambda_{\max}(\bm{A})| \geq 1$, scale $\bm{A}$ by $|\lambda_{\max}(\bm{A})|$.
    \item If every nonzero entry of $\bm{A}$ has magnitude less than $s_*$, reject; otherwise accept.
\end{enumerate}
where $|\lambda_{\max}(\bm{A})|$ is the maximum absolute eigenvalue of $\bm{A}$. This is used to sample from the set of statistically stationary \& spatially homogeneous VARs on a 2D grid with minimum signal strengths $s_* \geq 0.1$ and fixed local nonzero counts in the range $k \in \{1,2,\dots,9\}$

\subsection{Metrics Definitions}
\label{app:metrics}
Since \acp{VAR} map directly to ground truth causal graphs, we measured \mcastle's performance using binary classification measures. Let $G = (V, E)$ be the ground truth graph where $V$ is the set of nodes and $E \subseteq V \times V$ is the set of edges. For any node pair $(i,j) \in V \times V$, a positive instance is defined as $(i,j) \in E$ and a negative instance as $(i,j) \notin E$. This enables our usage of precision, recall, and $\text{F}_1$ score, defined as follows:
\begin{equation}
\text{Precision} = \frac{\text{TP}}{\text{TP} + \text{FP}}
\end{equation}
\begin{equation}
\text{Recall} = \frac{\text{TP}}{\text{TP} + \text{FN}}
\end{equation}
\begin{equation}
\text{F}_1 = \frac{2 \cdot \text{Precision} \cdot \text{Recall}}{\text{Precision} + \text{Recall}}
\end{equation}
where TP, FP, TN, and FN denote true positives, false positives, true negatives, and false negatives, respectively. Put simply, precision is the proportion of correctly detected positives to all detected positives, with a range of $[0, 1]$, where $1$ is a perfect precision; recall is the proportion of correctly detected positives to how many positives should have been detected, with a range of $[0,1]$, where $1$ is a perfect recall; and the $\text{F}_1$ score is the harmonic mean of precision and recall, with a range of $[0,1]$, where $1$ indicates perfect graph estimation. To avoid divide-by-zero edge cases, we used an adapted version fully detailed by \citet{Nichol.2023.10.2172/1991387}.

\section{Completed Data Generation Parameters}
\label{app:data_generation_parameters}
As noted in \Cref{sec:VAR}, not all parameter combinations generated stable systems. Here, we present the parameter ranges that did successfully generate 30 replicates to produce our results. We additionally evaluate the range of coefficient sizes generated, demonstrating the difficulty of creating complex systems with strong signals and many interdependencies.

\begin{figure*}[ht]
    \centering
    \includegraphics[width=0.82\textwidth]{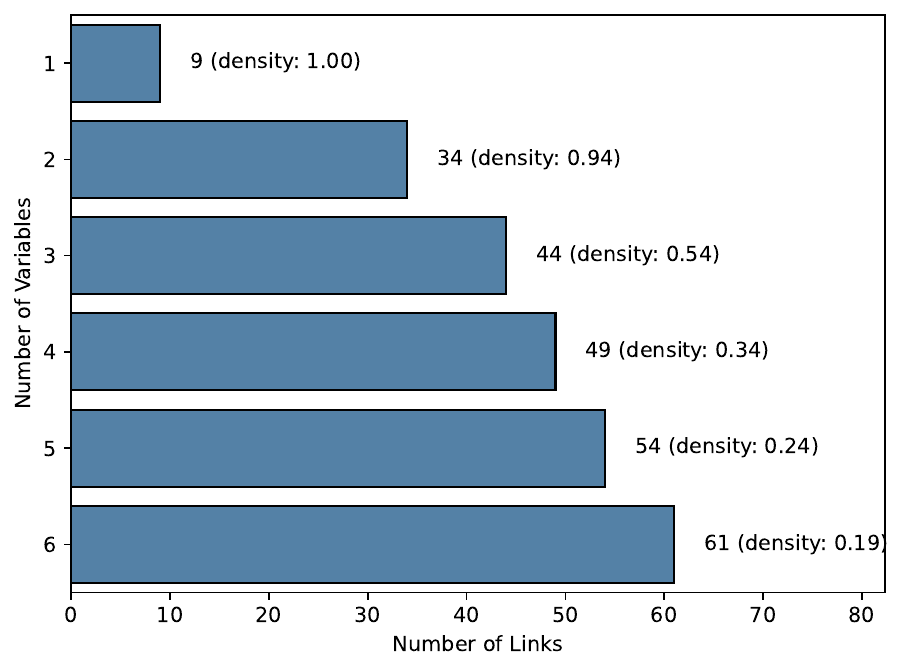}
    \caption{Extent of number of links with 30 Replicates. Horizontal bars show, for each Number of Variables (y‐axis), the maximum Number of Links (x‐axis) for which 30 independent simulations were performed; each bar is annotated to indicate the corresponding link density. Our experiments covered $4$$\times$$4$ grids and 1-6 variables per grid. All experiments used 1000 time samples and coefficient values between 0.1 and 1.0. The network density, $d \in (0, 1]$, is defined as the ratio of actual links, $E$, to the maximum possible number of links, $d = E/(9V^2)$. Not all density values produced 30 stable systems within our computational constraints, particularly at higher densities. This figure shows which parameter combinations successfully generated sufficient replicates for statistical analysis.}
    % \caption{Parameter ranges used in our experimental design, showing the link count distribution for each grid size and variable count combination. Each horizontal line represents the span of network links tested, with each parameter combination having at least 30 replicate experiments (n values shown). Our experiments covered grid sizes from $4$$\times$$4$ to $10$$\times$$10$ and 1-6 variables per grid. All experiments used 1000 time samples and coefficient values between 0.1 and 1.0. The network density, $d$, defined as the ratio of actual links, $E$, to maximum possible links $d = \frac{L}{(3 \times 3 \times V^2)}$, where $d \in (0, \dots 0.5]$. Not all density values produced 30 stable systems within our computational constraints, particularly at higher densities. This visualization shows which parameter combinations successfully generated sufficient replicates for statistical analysis.}
    \label{fig:param_ranges}
    % \Description[<short description>]{<long description>}
\end{figure*}

% Parameter ranges used in our experimental design, showing the link count distribution for each grid size and variable count combination. Each horizontal line represents the span of network links tested, with each parameter combination having at least 30 replicate experiments (n values shown). Our experiments covered grid sizes from 4×4 to 10×10 and 1-6 variables per grid. All experiments used 1000 time samples and coefficient values between 0.1 and 1.0. The network density, defined as the ratio of actual links ($E$) to the maximum possible links in a $3\times3$ stencil graph ($d = E/(9V^2)$), ranged from near zero to $0.5$. Not all theoretical density values produced 30 stable systems within our computational constraints, particularly at higher densities. This visualization shows which parameter combinations successfully generated sufficient replicates for statistical analysis.

\section{Additional \ac{VAR} Results}
\label{app:additional_var_results}
In this appendix, we present additional results related to the performance of our proposed method, \mcastle, with \ac{VAR} benchmarks. We delve into various metrics that evaluate the effectiveness of \mcastle.

\subsection{Exploring Recall}
\label{app:exploring_recall}
As discussed in \Cref{sec:VAR}, the decline in F$_1$ with increasing $V$ is driven primarily by reduced recall rather than reduced precision. Here, we provide supporting results showing that this behavior is consistent with weakening signal strength in stable, increasingly dense systems.

\Cref{fig:max_min_link_coefs} shows how the magnitudes of realized nonzero coefficients change as the number of links increases in the stable \ac{VAR} generator. Both the maximum and minimum coefficients decrease systematically with graph density, indicating that stability increasingly forces effects toward zero in more complex systems.

\begin{figure*}[ht]
    \centering
    \includegraphics[width=0.82\textwidth]{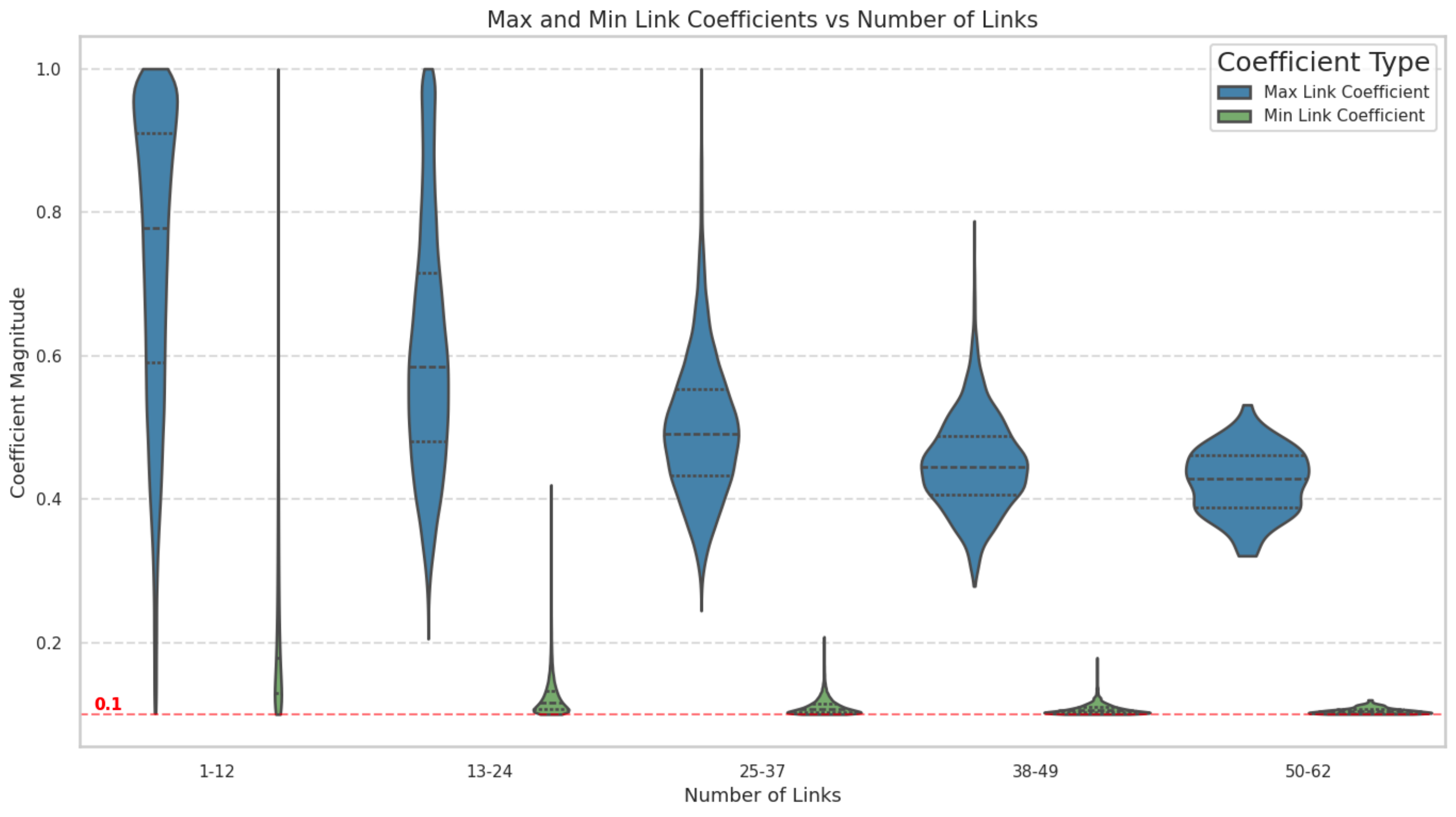}
    \caption{The relationship between link coefficients and the number of links present. As the number of links increases, maximum (blue) and minimum (green) link coefficients show a clear decreasing trend, with their distribution becoming narrower and centered around lower values. This reveals that networks with more links have weaker signals, suggesting that highly interconnected systems cannot be stable with large dependencies.}
    \label{fig:max_min_link_coefs}
\end{figure*}

To better evaluate the reason for \mcastle's relatively low recall, we tested on a separate set of benchmark systems. In these, we constructed simple systems with many more variables and a range of coefficient magnitudes. The systems model a chain of dependence across variables, with one incoming link per variable. Specifically, variable $v$ depends on exactly one spatially local parent in variable $v-1$ (i.e., $1\to2\to\cdots\to V$), replicated at every grid cell with a fixed coefficient per realization. The link is assigned a random parent location in the Moore neighborhood, and points to the center of the next variable. With this, we model different spatial relationships between variables but only one between variables. We explored $V \in \{10, 50, 100, 200\}$ and a set of 20 coefficients $\{c_i\}_{i=0}^{19}$ logarithmically spaced from 0.01 to 2.0, where $c_i = 0.01 \times 10^{2i/19}$. Every link had the same coefficient for each realization. Each realization had $T=1000$ time samples and we restricted the grid size to $4$$\times$$4$, which is the most challenging for \mcastle because there are fewer spatial replicates to leverage.

\begin{figure*}[ht]
    \centering
    \includegraphics[width=0.82\textwidth]{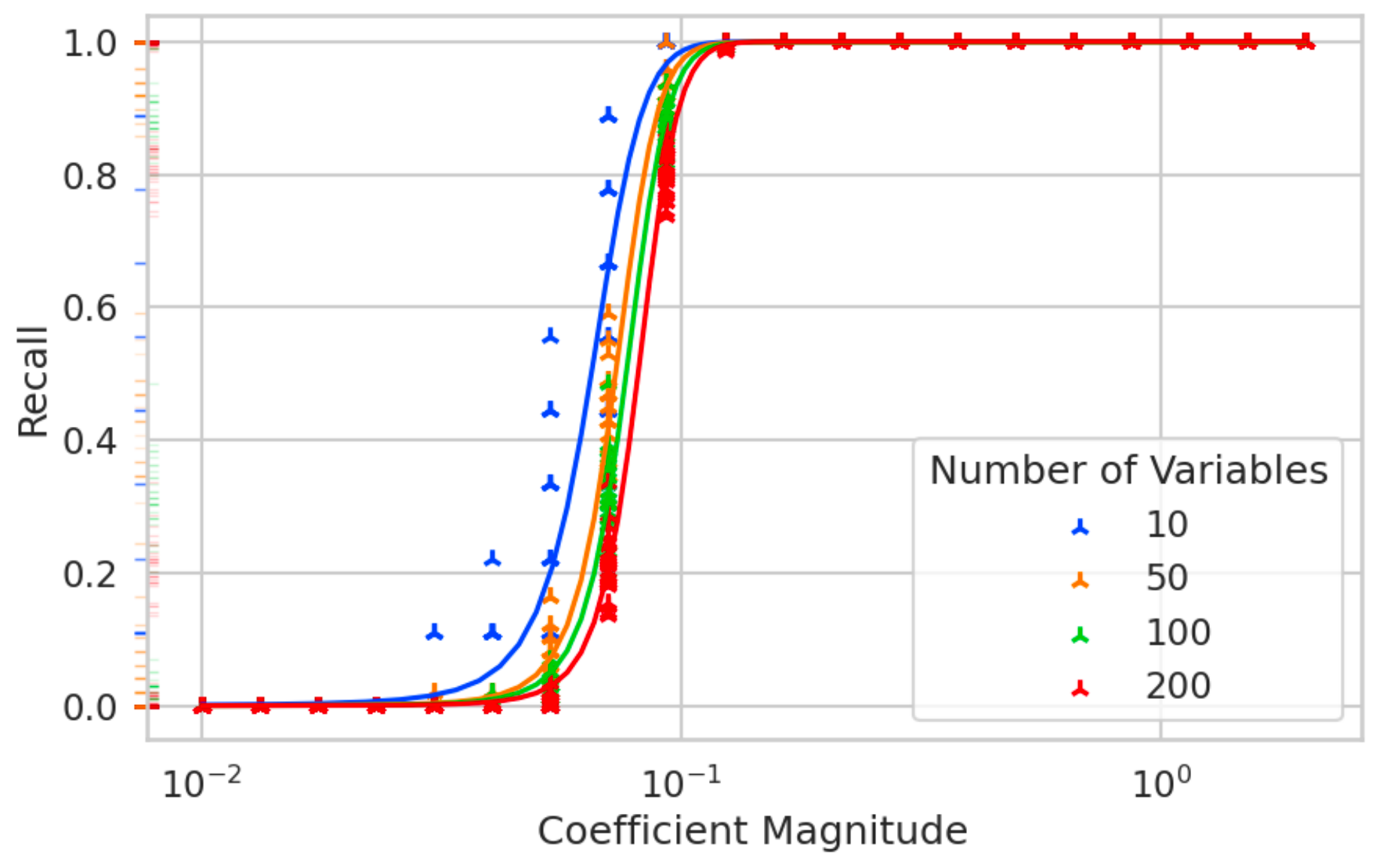}
    \caption{In simple chains of multivariate stencils, even with an extremely large number of variables, recall can be captured perfectly if the signal strength is large enough.}
    \label{fig:recall_coefs}
    % \Description[<short description>]{<long description>}
\end{figure*}

\Cref{fig:recall_coefs} illustrates that recall increases proportionately with coefficient magnitude for all numbers of variables. Recall is $0$ when coefficients are too small and $1$ when they are large enough. There is an inflection interval in the coefficient magnitudes in which recall increases sharply. The three-parameter sigmoid functions fit to each set of $V$s shows that recall is ordered by $V$. That means that, while high recall is achievable for up to 200 variables, systems with more variables are marginally more challenging to estimate, which conforms to our expectations. These results show that high recall is possible in high variable regimes if signals are strong enough.

The declining performance of \mcastle on large stable systems likely reflects a fundamental limit imposed by the stability constraint itself. For a dense transition matrix $A$ to satisfy $\rho(A) < 1$, the circular law requires that interaction strengths shrink with the square root of model complexity~\citep{Geman.1986.10.1214/aop/1176992372, MAY.1972.10.1038/238413a0}: as the number of interactions grows, each individual coefficient must approach zero to prevent the spectral radius from exceeding the stability boundary. This is not a property of any 
particular estimation method but a geometric consequence of how eigenvalues 
accumulate in large random matrices.

The statistical implication is that interaction strengths eventually fall below the noise floor of any finite-sample estimator. Because the signal available to a learning algorithm scales with the magnitude of the coefficients it must recover, and because that magnitude is bounded above by a quantity that vanishes with model complexity, the signal-to-noise ratio degrades monotonically as the system grows. This is consistent with hardness results in the system identification literature showing that the minimum eigenvalue of the controllability Gramian, which governs how much information a trajectory carries about the transition matrix, can itself vanish with model complexity~\citep{Tsiamis.2021.10.48550/arxiv.2104.01120}, and with the piranha constraint showing that many simultaneously large causal effects are geometrically incompatible with a bounded-variance process~\citep{Tosh.2025.10.1090/noti3044}.

Whether this constitutes a formal detection impossibility result for causal discovery in dense stable systems, and whether there exists a complexity threshold beyond which no algorithm can distinguish true causal structure from noise at any finite sample size, remains, to our knowledge, an open question.

\section{Detailed Information on the ADR Model and Experiments}
\label{app:ADR_details}
\subsection{Parameter Descriptions}
The \acf{ADR} model dynamics are influenced by several key parameters:
\begin{itemize}[leftmargin=*, itemsep=2pt]
    \item \textbf{Advection Velocity} ($v$): The speed at which the species are transported through the medium.
    \item \textbf{Advection Angle} ($\theta$): Determines the direction of transport for both species.
    \item \textbf{Diffusion Rate} ($D_1, D_2$): Quantifies the tendency of the species to spread out due to concentration gradients. A higher diffusion rate results in a more rapid dispersion of the species in the spatial domain.
    \item \textbf{Reaction Rate} ($R_1, R_2$): Govern the rates at which species $u_1$ decays into species $u_2$.
\end{itemize}

The reaction terms $R_1(u_1, u_2)$ and $R_2(u_1, u_2)$ govern the interaction between the two chemical species. Specifically, species $u_1$ undergoes a linear decay, converting into species $u_2$ at a rate controlled by two parameters: the decay rate ($\alpha$) and a conversion factor ($\beta$) that determines the proportion of $u_1$ converted into $u_2$. The reaction terms are defined as:
\[
R_1(u_1, u_2) = -\alpha u_1, \quad R_2(u_1, u_2) = \beta \alpha u_1
\]

\subsection{Boundary Conditions}
Neumann boundary conditions with zero flux were applied along all edges of the spatial domain. To avoid boundary effects influencing the dynamics, the spatial domain was chosen to be significantly larger than the region of interest. Only the interior region was analyzed, where species concentrations remain unaffected by the boundaries.

% \subsection{$F_1$ Score Definition}
% \( F_1 \) Score, defined as follows:
% \[
% F_1 = \frac{2\text{TP}}{2\text{TP} + \text{FP} + \text{FN}}
% \]
% where TP, FP, TN, and FN are true positive count, false positive count, true negative count, and false negative count, respectively. Here, a positive is a graph edge that exists, and a negative is a graph edge that does not exist. To avoid divide-by-zero edge cases, we used an adapted version fully detailed by \citet{Nichol.2023.10.2172/1991387}.

\subsection{Estimation Procedure}
Mathematically, angle estimation can be described in the general case as:
\[
\hat{\theta} = \texttt{atan2}\left(\sum_{l \in \mathscr{P}(\texttt{C})} e_l \sin \theta_l, \sum_{l \in \mathscr{P}(\texttt{C})} e_l \cos \theta_l\right).
\]
where \texttt{atan2} is the signed arctangent function; $\mathscr{P}(\text{Parents}(C_1, C_2, \ldots, C_k)) = \mathscr{P}(\{d_i : d \in \{\texttt{NW}, \texttt{N}, \texttt{NE}, \texttt{E}, \texttt{SE}, \texttt{S}, \texttt{SW}, \texttt{W}\}, i \in \{1,2,\ldots,k\}\})$, for $i$ species, represents all potential parents of all center cells across all species; $e_l$ represents the strength of that edge (0 for non-present edges); and $\theta_l$ represents the angle of that edge ($135^{\circ}, 90^{\circ}, \dots, 180^{\circ}$).

\subsection{Experimental Details}
To evaluate the performance of \mcastle on realistic physical dynamics, we conducted a series of \ac{ADR} model experiments. Diffusion coefficients for the two species ($D_1$ and $D_2$) varied together between $0.005$ and $0.4$, advection velocities ($v$) ranged from $1.0$ to $3.0$, with angles ($\theta$) from $0^\circ$ to $90^\circ$. Reaction rates ($\alpha$) were set to $1.0$ with a scaling factor ($\beta$) of $1.0$, which capture the decay of $u_1$ into $u_2$. We explored ranges of $\alpha$ and $\beta$, but found they only scaled the modeled data and \mcastle's results were not affected. The initial concentration of $u_1$ was set to $50$ and $u_2$'s was set to zero. Varying initial concentration also had no effect on \mcastle results, so it is omitted. These experiments systematically explore the interplay between advection, diffusion, and reaction processes under diverse conditions to assess the method’s ability to infer causal dynamics.

\subsection{Hyperparameter Details}
For all \ac{ADR} experiments, we implemented \mcastle's \ac{PIP} with the \ac{PC}-stable~\citep{Colombo.2014} causal discovery algorithm. \mcastle inherits two hyperparameters from \ac{PC}-stable:
\begin{itemize}[leftmargin=*, itemsep=2pt]
    \item \textbf{P-value Threshold} ($PC_\alpha$): Used for individual independence tests.
    \item \textbf{Graph Threshold}: A p-value threshold used in the Benjamini–Hochberg false discovery rate correction~\citep{Benjamini.1995.10.1111/j.2517-6161.1995.tb02031.x}, applied after the initial graph estimation.
\end{itemize}
In the experiments presented, both thresholds were set to $0.01$. Results were found to be insensitive to these parameters.

\subsection{Causal Graph Diagram}
\Cref{fig:ADR_TSG} shows the ground truth causal graph for two species in our \ac{ADR} model. Both species persist from one time step to the next as they advect and diffuse in the space, denoted by the autodependency straight arrows, and $u_1$ decays into $u_2$, denoted by the curved arrow. The reaction graph from the multivariate stencil graph decomposition has the same form, enabling a one-to-one comparison.

\begin{figure}[htbp]
\begin{center}
\begin{tikzpicture}[
    ->, 
    >=stealth, 
    node distance=3cm and 4cm,
    thick,
    state/.style={
        draw, 
        circle, 
        minimum width=1.2cm,
        minimum height=1.2cm,
        font=\large,
        inner sep=0pt
    }
]
    % Time t nodes (left column)
    \node[state] (u1t) {$u_{1,t}$};
    \node[state, below=of u1t] (u2t) {$u_{2,t}$};
    
    % Time t+1 nodes (right column)
    \node[state, right=of u1t] (u1tp1) {$u_{1,t+1}$};
    \node[state, right=of u2t] (u2tp1) {$u_{2,t+1}$};
    
    % Auto-evolution edges (same species)
    \draw[->] (u1t) -- (u1tp1) node[midway, above, font=\small] {
        \begin{tabular}{c}
        diffusion/advection
        \end{tabular}
    };
    \draw[->] (u2t) -- (u2tp1) node[midway, below, font=\small] {
        \begin{tabular}{c}
        diffusion/advection
        \end{tabular}
    };
    
    % Cross-species interaction edges
    \draw[->] (u1t) to[bend left=20] node[midway, right, font=\small] {Decay} (u2tp1);
    
    % Axis labels
    \node[above=0.8cm of $(u1t)!0.5!(u1tp1)$, font=\large\bfseries] {Time};
    \node[left=0.7cm of $(u1t)!0.5!(u2t)$, font=\large\bfseries] {Species};
\end{tikzpicture}
\end{center}
\caption{Time series causal graph for the \ac{ADR} model. Nodes represent species \( u_1 \) and \( u_2 \) at each consecutive time step (\( t \) and \( t+1 \)). Straight arrows indicate intra-species relationships across time, capturing diffusion and advection dynamics. The curved arrow represents inter-species causal interactions governed by reaction terms \( R_1(u_1, u_2) \) and \( R_2(u_1, u_2) \).}
\label{fig:ADR_TSG}
\end{figure}

\section{Additional ADR Results}
\label{app:additional_adr_results}

\subsection{\( F_1 \) Score Distribution for Reaction Graph Estimations}
\label{app:adr_f1_scores}

\Cref{fig:reaction_graph_f1_appendix} provides a detailed histogram of the \( F_1 \) scores obtained from the reaction graph estimations in the \ac{ADR} model experiments (\( n = 672 \)). The distribution demonstrates that the majority of reaction graphs achieved perfect \( F_1 \) scores, with a median of \( 1.0 \) and a mean of \( 0.912 \) (red dashed line). A small subset of experiments resulted in \( F_1 \) scores below \( 0.8 \), and two experiments had \( F_1 = 0.0 \), indicating no links were identified in the graph. These results emphasize \mcastle's robustness in recoverable regimes and highlight the challenges in cases with weak or absent reaction signals.

\begin{figure}[ht]
    \centering
    \includegraphics[width=0.82\columnwidth]{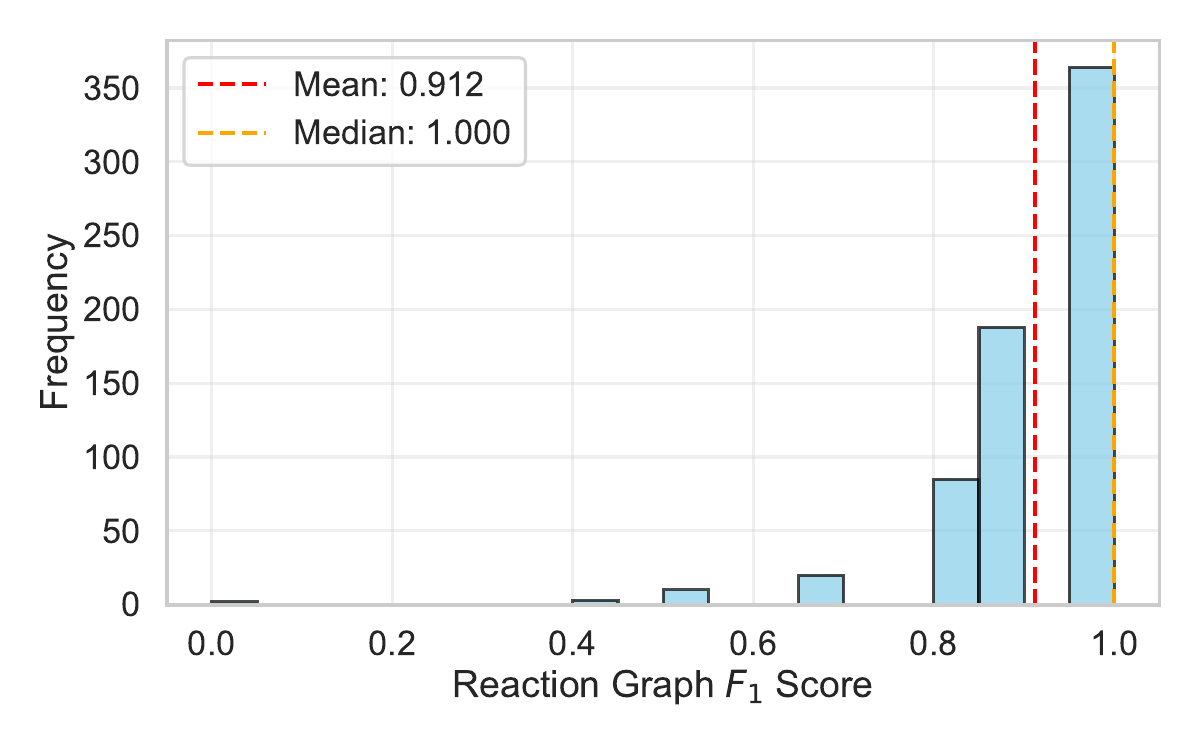}
    \caption{Histogram showing the frequency distribution of \( F_1 \) scores for reaction graph estimations (\( n = 672 \)). The distribution has a mean of \( 0.912 \) (red dashed line) and median of \( 1.000 \) (orange dashed line), indicating that the majority of reaction graphs achieved perfect \( F_1 \) scores, with a smaller subset showing lower performance. Rare cases with \( F_1 = 0 \) are when no links were identified in the graph.}
    \label{fig:reaction_graph_f1_appendix}
    % \Description[<short description>]{<long description>}
\end{figure}

\section{Supplementary Details on Atmospheric Chemistry and Experimental Setup}
\label{sec:appendix_atmospheric_chemistry}

\subsection{Background on Atmospheric Chemistry}
The Mount Pinatubo eruption in 1991 injected approximately 20 Tg of \ce{SO2} into the atmosphere~\citep{Guo.2004.10.1029/2003gc000654,Kremser.2016.10.1002/2015rg000511}, forming stratospheric sulfate aerosols that persisted for approximately two years. The resulting climate perturbation produced stratospheric warming of $\sim$1.5K and surface cooling of 0.2--0.5K~\citep{Dutton.1992.10.1029/92gl02495,parker1996,Soden.2002.10.1126/science.296.5568.727}, providing a natural analogue for stratospheric aerosol injection climate strategies~\citep{Trenberth.2007,Weylandt.2024.10.1175/JCLI-D-23-0267.1}. Ongoing research efforts focus on characterizing the detailed mechanisms of the Pinatubo response, particularly emphasizing the temporal evolution and geographic patterns of surface climate impacts, which provide crucial insights for climate intervention policy development~\citep{Weylandt.2024.10.1175/JCLI-D-23-0267.1}.

The \ac{E3SMv2-SPA} model used in this study incorporates atmosphere, land, oceanic, sea ice, land ice, and river components~\citep{Brown.2024.10.5194/gmd-17-5087-2024}. Particularly of note is the model's prognostic sulfate chemistry including \ce{SO2}, \ce{OH}, \ce{H2SO4}, and \ce{SO4} species. The intrinsic representation of aerosols within this modeling framework introduces additional complexity to the chemical pathway interpretation.

\subsection{Experimental Setup Details}
A large number of Earth system simulations were conducted for the CLimate Impact: Determining Etiology thRough pAthways (CLDERA) project at Sandia National Laboratories~\citep{Bull.2024.10.2172/2480139}. Experiments varied levels of Mt. Pinatubo prognostic aerosols in the \ac{E3SMv2-SPA} model to study with different levels of climate variability. Some sets of experiments used tagged aerosols from the eruption to track which aerosol particles originated from the eruption as they spread around the globe and underwent chemistry into different species.

During the early months of the Mt. Pinatubo eruption, stratospheric winds were consistently moving westward in the tropical latitudes, where Pinatubo resides~\citep{Thomas.2009.10.5194/acp-9-3001-2009}. We captured gridded data in the region between \SIrange{0}{30}{\degree}N and \SIrange{60}{90}{\degree}E, a small distance west of Mt. Pinatubo. The dataset has a $1^\circ$ spatial resolution (corresponding to approximately 107 km at 15 degrees N), with daily time samples per grid cell. We captured seven days of data after the eruption on June 15, 1991.

\subsection{Dataset Limitations}
The relatively coarse time sampling of daily data was problematic for estimating the dependence of FSDS without link assumptions. Additionally, all of the chemical species are column-integrated quantities, meaning that they are 2D fields per time step, representing the total mass or aerosol optical depth of the quantity through the entire atmosphere. This limitation is common in simulation datasets and observed satellite data.

\subsection{Algorithmic Details}
\label{app:chem_alg_details}
We implemented \mcastle's PIP using the \ac{PC}MCI algorithm~\citep{Runge.2019.10.1126/sciadv.aau4996}. We set $PC_\alpha$ and the graph-threshold hyperparameters to $0.01$. Finally, we applied a coefficient threshold of $0.125$ to the graph as a regularizer to remove some particularly weak dependencies. Reaction graph dependencies may have a strength below that threshold due to aggregation effects in the decomposition process.

\mcastle's link assumptions were applied to constrain the parent sets specifically for radiative flux. Similar to the link assumptions functionality in the Tigramite causal discovery Python package~\citep{tigramite}, these specified assumptions embed subject matter expertise into the causal discovery algorithm. In this case, we stipulated that FSDS cannot be a causal parent of any other species. We did not specify whether it could be a child of any species, nor did we constrain any other relationships. To be clear, chemical species estimation was not given any link assumptions.

\section{\ac{ENSO} Study's Algorithmic Details}
\label{app:enso_alg_details}
\ac{DYNOTEARS} has two core structure-learning hyperparameters for regularization~\citep{Pamfil.2020}:
\begin{itemize}[leftmargin=*, itemsep=2pt]
    \item $\lambda_a$ L1 regularization coefficient for contemporaneous dependencies and
    \item $\lambda_w$ L1 regularization coefficient for lagged dependencies.
\end{itemize}
In both cases, higher values create sparser graphs. There is also an edge-weight threshold, \texttt{w\_threshold}, below which edges are pruned to zero after optimization, and \texttt{max\_iter}, which sets the maximum number of iterations in the augmented Lagrangian method. 

Since this implementation of \mcastle only considers lag-1 relationships, $\lambda_a$ is unused.

The \ac{ENSO} case study in \Cref{sec:enso} used the following:
\begin{itemize}[leftmargin=*, itemsep=2pt]
    \item $\lambda_w = 0.01$ (default)
    \item \texttt{w\_threshold} $= 0.01$ (default $=0.0$)
    \item \texttt{max\_iter = 100} (default)
\end{itemize}

\end{document}

%% file: acronyms.tex
\DeclareAcronym{ENSO}{
    short=ENSO,
    long=El Ni\~{n}o Southern Oscillation
}

\DeclareAcronym{ERA5}{
    short=ERA5,
    long=European Centre for Medium-Range Weather Forecasts Reanalysis v5,
    cite=Hersbach.2020.10.1002/qj.3803
}

\DeclareAcronym{E3SM}{
    short=E3SM,
    long=Energy  Exascale  Earth  System Model,
    cite=e3sm-model
}

\DeclareAcronym{E3SMv2-SPA}{
    short=E3SMv2-SPA,
    long=Energy  Exascale  Earth  System Model version 2 with Stratospheric Prognostic Aerosols,
    cite=Brown.2024.10.5194/gmd-17-5087-2024
}

\DeclareAcronym{HSW-V}{
    short=HSW-V,
    long=Held-Suarez-Williamson-Volcanic,
    cite=Hollowed.2024.10.5194/gmd-17-5913-2024
}

\DeclareAcronym{ADR}{
    short=ADR,
    long=advection-diffusion-reaction
}

\DeclareAcronym{SCM}{
    short=SCM,
    long=structural causal model
}

\DeclareAcronym{CaStLe}{
    short=CaStLe,
    long=\textbf{Ca}usal \textbf{S}pace-Time S\textbf{t}encil \textbf{Le}arning
}

\DeclareAcronym{M-CaStLe}{
    short=M-CaStLe,
    long=Multivariate CaStLe
}

\DeclareAcronym{RFR}{
    short=RFR,
    long=random forest regression,
    cite=Breiman2001
}

\DeclareAcronym{ML}{
    short=ML,
    long=machine learning
}

\DeclareAcronym{ACC}{
    short=ACC,
    long=anomaly correlation coefficient
}

\DeclareAcronym{NSIDC}{
    short=NSIDC,
    long=National Snow \& Ice Data Center,
    cite=Peng2013
}

\DeclareAcronym{DOE}{
    short=DOE,
    long=United States Department of Energy,
}

\DeclareAcronym{MAE}{
    short=MAE,
    long=mean absolute error,
}

\DeclareAcronym{piControl}{
    short=piControl,
    long=pre-industrial control simulation,
}

\DeclareAcronym{CESM}{
    short=CESM,
    long=community Earth system model,
    cite=Kay2015
}

\DeclareAcronym{NOAA}{
    short=NOAA,
    long=National Oceanic and Atmospheric Administration,
    cite=NOAA_ERSST_V4
}

\DeclareAcronym{NCEP}{
    short=NCEP,
    long=National Centers for Environmental Prediction,
    cite=NCEP_Reanalysis
}

\DeclareAcronym{PIOMAS}{
    short=PIOMAS,
    long=Pan-Arctic Ice Ocean Modeling and Assimilation System,
    cite=Schweiger2011
}

\DeclareAcronym{CMIP}{
    short=CMIP,
    long=Coupled Model Intercomparison Project 
}

\DeclareAcronym{CMIP3}{
    short=CMIP3,
    long=phase 3 
}

\DeclareAcronym{CMIP5}{
    short=CMIP5,
    long=phase 5 
}

\DeclareAcronym{CMIP6}{
    short=CMIP6,
    long=phase 6,
    cite=Eyring2016
}

\DeclareAcronym{ESM}{
    short=ESM,
    long=Earth system model 
}

\DeclareAcronym{SVR}{
    short=SVR,
    long=support vector regression 
}

\DeclareAcronym{SIE}{
    short=SIE,
    long=sea ice extent
}

\DeclareAcronym{SIV}{
    short=SIV,
    long=sea ice volume
}

\DeclareAcronym{CLT}{
    short=CLT,
    long=total cloud cover percentage
}

\DeclareAcronym{FLWS}{
    short=FLWS,
    long=downward longwave flux at surface
}

\DeclareAcronym{PS}{
    short=PS,
    long=pressure at the surface
}

\DeclareAcronym{TS}{
    short=TS,
    long=temperature at the surface
}

\DeclareAcronym{SSH}{
    short=SSH,
    long=near-surface specific humidity
}

\DeclareAcronym{SST}{
    short=SST,
    long=sea surface temperature
}

\DeclareAcronym{OLR}{
    short=OLR,
    long=outgoing longwave radiation
}

\DeclareAcronym{uwind}{
    short=uwind,
    long=wind u component/zonal
}

\DeclareAcronym{vwind}{
    short=vwind,
    long=wind v component/meridional
}

\DeclareAcronym{AOD}{
    short=AOD,
    long=aerosol optical depth
}

\DeclareAcronym{EOF}{
    short=EOF,
    long=empirical orthogonal function
}

\DeclareAcronym{LENS}{
    short=LENS,
    long=Locally Encoded Neighborhood Structure
}

\DeclareAcronym{NDM}{
    short=NDM,
    long=neighborhood dependence matrix
}

\DeclareAcronym{ODE}{
    short=ODE,
    long=ordinary differential equation
}

\DeclareAcronym{PCA}{
    short=PCA,
    long=principal component analysis
}

\DeclareAcronym{PC}{
    short=PC,
    long=Peter-Clark
}

\DeclareAcronym{PCMCI}{
    short=PCMCI,
    long=PC Momentary Conditional Independence
}

\DeclareAcronym{DYNOTEARS}{
    short=DYNOTEARS,
    long=Dynamic-NOTEARS
}

\DeclareAcronym{PDE}{
    short=PDE,
    long=partial differential equation
}

\DeclareAcronym{PIP}{
    short=PIP,
    long=Parent-Identification Phase
}

\DeclareAcronym{PPR}{
    short=PPR,
    long=positive prediction rate
}

\DeclareAcronym{VAR}{
    short=VAR,
    long=vector autoregression model
}

\DeclareAcronym{RCT}{
    short=RCT,
    long=randomized control trial
}

\DeclareAcronym{DAG}{
    short=DAG,
    long=directed acyclic graph
}